\newcommand{\mnorm}[1]{\left\| #1 \right\|}
\newcommand{\mmat}[1]{\begin{bmatrix} #1 \end{bmatrix} }
\newcommand{\mpd} [2]{\frac{\partial #1}{\partial #2}} 
\newcommand{\figref}[1]{Figure~\ref{#1}}
\newcommand{\secref}[1]{Section~\ref{#1}}
\newtheorem{prop}{Proposition}
\title{Model-free online motion adaptation for energy-efficient flight of multicopters}
\author{Xiangyu Wu$^1$, Jun Zeng$^2$, Andrea Tagliabue$^3$, and Mark W. Mueller$^1$
\thanks{$^1$Authors are with the High Performance Robotics Laboratory (HiPeRLab) at the Department of Mechanical Engineering, UC Berkeley. {\tt\small \{wuxiangyu,mwm\}@berkeley.edu}}
\thanks{$^2$Author is with the Hybrid Robotics Group at the Department of Mechanical Engineering, UC Berkeley. {\tt\small zengjunsjtu@berkeley.edu}}
\thanks{$^3$Author is with the Aerospace Controls Laboratory at the Department of Aeronautics and Astronautics, MIT. {\tt\small atagliab@mit.edu }} 
}
\begin{document}
\maketitle

\begin{abstract}
Limited flight distance and time is a common problem for multicopters. 
We propose a method for finding the optimal speed and sideslip angle of a multicopter flying a given path to achieve either the longest flight distance or time.
Since flight speed and sideslip are often free variables in multicopter path planning, they can be changed without changing the mission.
The proposed method is based on a novel multivariable extremum seeking controller with adaptive step size, which is inspired by recent work from the machine learning community on stochastic optimization.
Our method 
(a) does not require a power consumption model of the vehicle, 
(b) is computationally efficient and runs on low-cost embedded computers in real-time, and 
(c) converges faster than the standard extremum seeking controller with constant step size.
We prove the stability of this approach and validate it through outdoor experiments.
The method is shown to converge with different payloads and in the presence of wind.
Compared to flying at the maximum achievable speed in the experiments with a uniformly selected random sideslip angle, flying at the optimal range speed and sideslip on average increases the flight range by 14.3\% without payload and 19.4\% with a box payload.
In addition, compared to hovering, flying at the optimal endurance speed and sideslip  increases the flight time by 7.5\% without payload and 14.4\% with a box payload.
A video can be found at \url{https://youtu.be/aLds8LVfogk}
\end{abstract}

\section{Introduction}
\label{sec:intro}

Multicopters are used in a wide range of applications such as aerial photography \cite{photography2}, transportation \cite{transportation2}, search and rescue \cite{search_and_rescue}, inspection \cite{inspection2}, and agriculture \cite{farming2}, thanks to their low cost, ease of control, and high maneuverability.
However, a primary limitation for current vehicles is their limited flight endurance and range \cite{karydis2017energetics}. 

One way to improve the limited flight range or endurance problem is through energy-efficient mechanical design.
For example, in \cite{TriangularQuad} a triangular quadcopter with one large rotor for lifting and three small rotors for control was proposed, which has the advantage of combining the energy efficiency of the large rotor and the fast control response of the small rotors.
In \cite{tilt_motor_drone}, the authors designed a quadcopter with slightly tilted motors which has a better control authority over the yaw. 
This results in a lower variance in motor forces for yaw control.
Because a motor's power is a convex function of its thrust, this design helps to reduce the total power consumption of the motors.
Hybrid quadcopters which are able to do both aerial and ground locomotion, were introduced in \cite{HyTAQ2} and \cite{shapeshifter}: when the vehicles operate in the ground locomotion mode on a flat ground, they only need to overcome the rolling resistance and use much less power compared to flying.
A hybrid power system for multicopters consisting of a lithium battery, a fuel cell, and a hydrogen tank was introduced in \cite{fuel_cell_drone}, which enables longer flight time compared to traditional battery-only power systems, thanks to the higher specific energy of hydrogen compared with the lithium battery.
In \cite{switchingBattery}, an in-flight battery switching system was proposed, which enables a small quadcopter to dock an additional battery to a large quadcopter and increases its flight time.


\begin{figure}[t]
	\centering
	\hspace{-2.5mm} \subfigure{\includegraphics[width=0.98\linewidth]{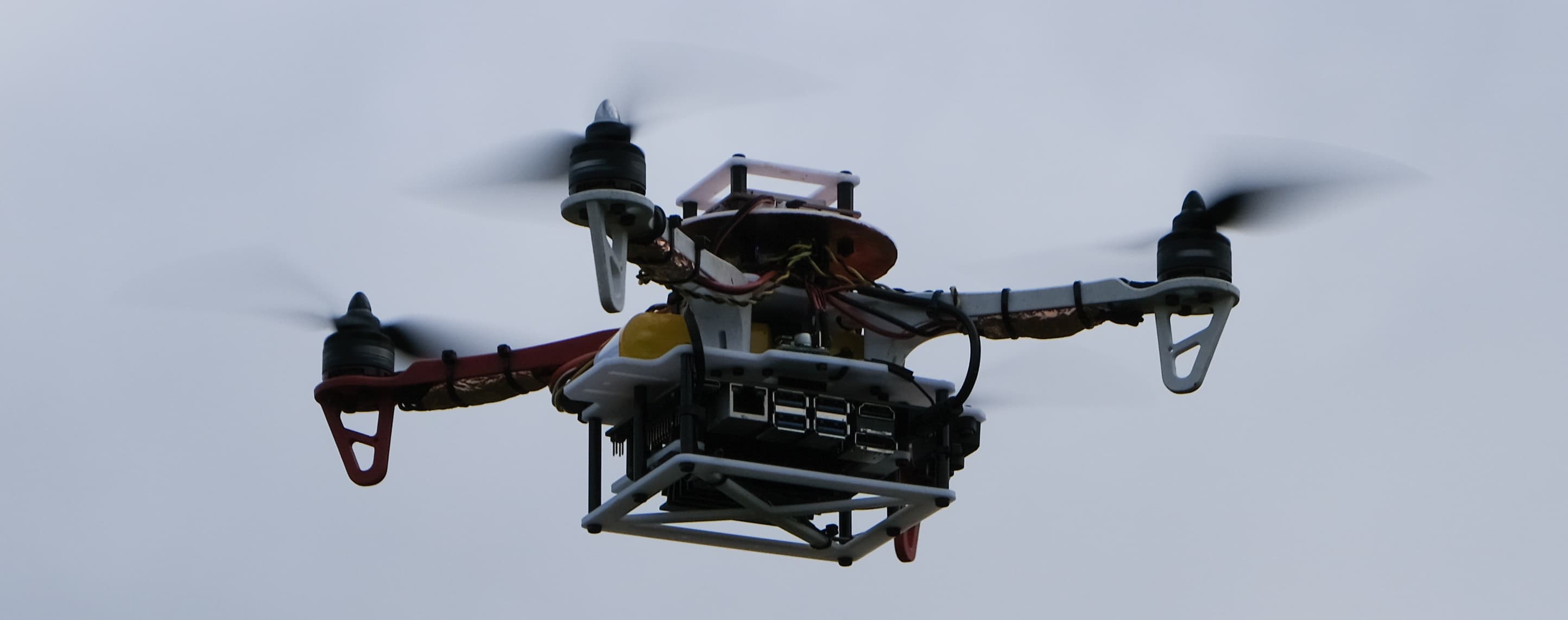}} \\[-0.8ex] 
	\subfigure{\includegraphics[width=0.98\linewidth]{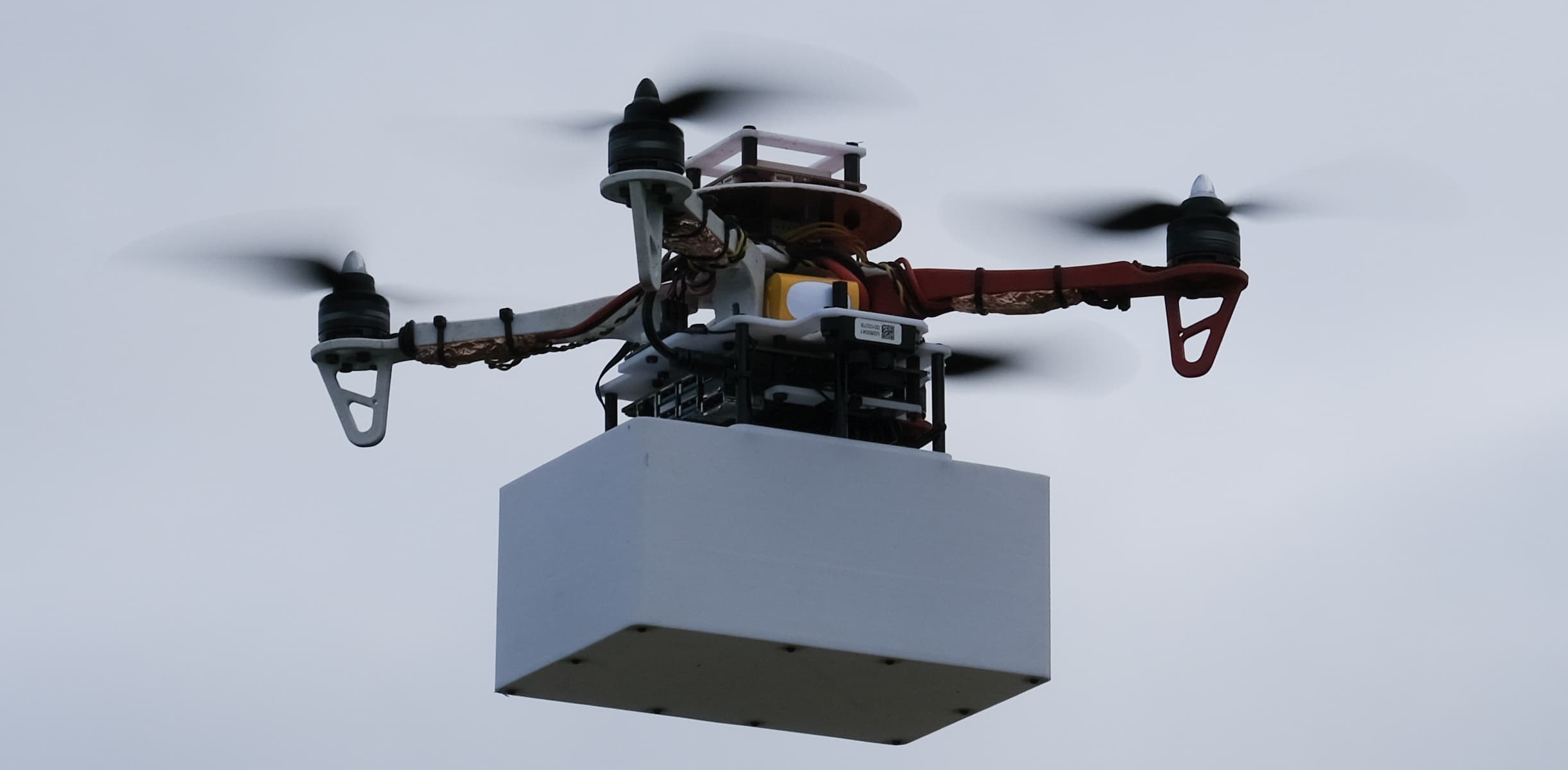}}
	\vspace{-1ex}
	\caption{
	A quadcopter with and without a box payload with unknown aerodynamics effects, as used in our experiments.
	}
	\label{fig:front}
\end{figure}

Another category of methods focus on developing algorithms to reduce the power consumption of existing multicopters.
By planning energy-efficient trajectories or by implementing energy-aware control algorithms, these approaches do not require design changes to existing hardware and are thus economical to deploy.
For example, in \cite{modelBased2} the authors proposed a method for finding the minimum-energy trajectory between a predefined initial and final state of a quadcopter, by solving an optimal control problem of the angular accelerations of the four propellers. 
This approach was extended in \cite{modelBased3}, where the fixed end-time trajectory optimization was extended to both free and fixed end-time solved with an indirect projected gradient algorithm to improve the numerical accuracy.
Simulation results were shown to validate the effectiveness of the methods in both papers.
In \cite{modelBased1}, the task of reaching a goal in a set of candidate goals while using the least amount of energy was investigated.
The energy-efficient path planning algorithm was based on the model predictive control and disturbance from wind was considered.
The authors showed that their method was able to reach the goal which required the least amount of energy in simulations and indoor experiments.
In \cite{modelBased4} and \cite{modelBased5}, the authors proposed energy-aware coverage path planning methods for photogrammetric sensing of large areas using multicopters.
The methods find the optimal speed along the coverage path to minimize the energy usage during the mission.
Outdoor experiments were conducted to validate their methods.
\par
A necessary condition for model-based methods to perform well is accurate power consumption modeling. 
Power consumption models of multicopters can be derived by analyzing their electric and aerodynamic properties.
For example, \cite{ampatis2014parametric} \cite{uavEnergyModeling} introduced power consumption models of the battery, electric speed controller and motor, and \cite[Chapter 5]{leishman2006principles} introduced the aerodynamic power consumption of the propeller based on the momentum theory. 
Besides, some researchers proposed data-driven models by selecting variables that affect the power consumption (e.g. the vehicle's speed and acceleration, wind speed, and payload weight) as inputs and finding their relationship to power consumption through experimental data \cite{modelBased4} \cite{dataDrivenModel}. 

However, there are often hard-to-model effects on the vehicle's power consumption, such as changes in vehicle components' performance (e.g. batteries and motors) due to aging and temperature changes.
In addition, the change in payload size, shape, or weight in applications such as package delivery and spraying (e.g., pesticides or fertilizer at farms) often requires reidentification of parameters in the power consumption model, which is time-consuming. 
The imperfections in the energy model could potentially be compensated using online data-driven methods. For example, in \cite{LBMPC} the authors used an Extended Kalman Filter and in \cite{DataDrivenMPC} the authors used Gaussian processes to estimate the correction terms in the vehicle's dynamics equations, which improved the control accuracy of the quadcopters. However, to the best of our knowledge, no such methods have been developed for the energy efficient flight of quadcopters yet, and their effectiveness and computational efficiency are thus still an open question.

The aforementioned difficulties in quadcopter energy consumption modeling motivates us to propose, to the best of our knowledge, the first model-free method for finding the flight speed and sideslip angle (i.e., angle between the forward direction of the vehicle and the relative wind) which achieve the longest flight time (endurance) or flight range given a predefined path.
The method is based on a novel multivariable extremum seeking controller and does not require power consumption models of the multicopter.
\par
Extremum seeking control is a model-free adaptive control technique for finding the local minimizer of a given, potentially time-varying, cost function by applying a persistently exciting periodic perturbation to a set of chosen inputs, and monitoring the corresponding output changes.
A survey of the development of this control method can be found at \cite{ESC_survey}.
It has applications in areas such as maximizing the energy generation of wind turbines \cite{wind_turbine} and photovoltaic power plants \cite{photovoltaic_power}, 
and maximizing the pressure rise in axial flow compressors \cite{ESC_application_compressor}.
Its applications in robotics can be found in a literature survey \cite{esc_robotics_applications}.
A common problem of extremum seeking controllers is their slow convergence speed, and we propose a novel multivariable extremum seeking controller with adaptive step size to improve it. 
In addition to the flight speed, it could also simultaneously find the optimal flight sideslip angle to achieve the longest flight range or endurance (time). 
\par


The major contributions of this paper are as follows:
\begin{enumerate}
    \item We present a model-free adaptive method to find the flight speed and sideslip angle of multicopters that achieve the longest flight range or endurance. 
    \item The method is based on a novel multivariable extremum seeking controller with adaptive step size, which is computationally efficient and converges faster than the standard extremum seeking.
    \item We give a stability proof for the proposed controller via averaging and singular perturbation analysis.
    \item We validate the effectiveness of the proposed method in extensive outdoor experiments. The experiments demonstrate the proposed method's faster convergence compared with the standard method, robustness to payloads and wind disturbances.
\end{enumerate}

\par
This is an evolved paper based on our prior work  \cite{previous_paper_1, previous_paper_2}.
In contrast to the prior work, this paper presents:
\begin{enumerate}
    \item A stability proof of the proposed multivariable extremum seeking controller with adaptive step size taking into account the vehicle's dynamics.
    \item Extensive outdoor experiments with practical real-world sensing instead of the  previous work's indoor experiments with a motion capture system for state estimation.
    \item Applications of extremum seeking to time optimal flight in addition to range optimal flight, by searching for the optimal endurance flight speed and sideslip angle.
    \item Experiments and discussion about the energy cost from the extremum seeking controller because of perturbation.
    \item Experiments and analysis about the proposed method's performance under wind disturbances.
\end{enumerate}


\label{sec:Introduction}

\section{Problem statement}
\label{sec:problem_statement}

\begin{figure*}[!ht]
	\begin{center}    
		\includegraphics[width=\linewidth]{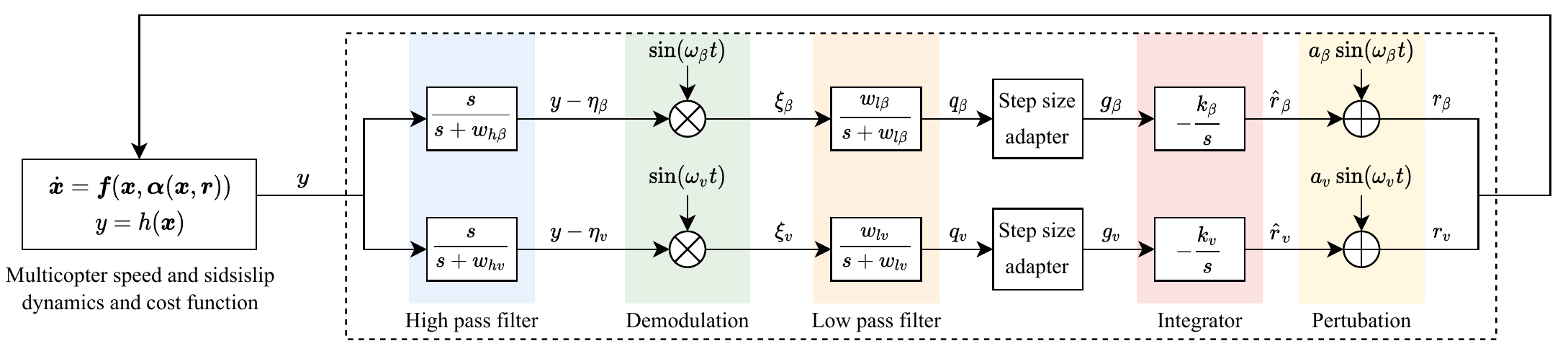}
	\end{center}
	\vspace{-1ex}
	\caption{Block diagram of the adaptive step size multivariable extremum seeking controller (in the dashed rectangle). 
	The goal of the controller is to find the optimal sideslip $r_\beta$ and $r_v$ to minimize the cost function $y = (h \circ l)(\bm{r})$. 
	The frequencies of the high pass and low pass filters are set, respectively, to $\omega_{{hv}}$ and $\omega_{{lv}}$ for speed, and $\omega_{{h\beta}}$ and $\omega_{{l\beta}}$ for sideslip. 
	The scalar $k_v$ and $k_\beta$ are related to the step size of the extremum seeking controller and both of them should be positive numbers to minimize the cost function. 
	The standard extremum seeking controller with sinusoidal perturbations does not have the step size adapter and the outputs of the low pass filters directly go to the integrator, while the remaining structure of the algorithm is exactly the same. The step size adapter is detailed in \secref{subsubsec:step_size_adapter}. 
	}
	\label{fig:escDiagram}
\end{figure*}

In this work, we propose a method to find the most  energy-efficient flight speed and sideslip angle to mitigate the common problem of the limited flight range and endurance of multicopters. 
\par

\begin{table}[!h]
	\def\arraystretch{1.2}
	\caption{Notations used in \secref{sec:method}.}
	\vspace{-1ex}
	\label{tab:symbols}
	\centering

\begin{tabular}{|l|l|}
\hline
Symbol & Meaning \\ \hline
$\bm{x} = [v,  \beta]^T$ & \begin{tabular}[c]{@{}l@{}}state variables related to energy-efficient flight,\\ $v$ is flight speed, $\beta$ is sideslip angle\end{tabular} \\ \hline
$\bm{r} = [r_v, r_\beta]^T$ & \begin{tabular}[c]{@{}l@{}}extremum seeking controller's outputs,\\ $r_v$ is ref. speed, $r_\beta$ is ref. sideslip\end{tabular} \\ \hline
$\hat{\bm{r}} = [\hat{r}_v, \hat{r}_\beta]^T$ & \begin{tabular}[c]{@{}l@{}}ref. speed and sideslip without perturbations\\ (outputs of the integrators)\end{tabular} \\ \hline
$\bm{r}^* = [r_v^*, r_\beta^*]^T$ & local minimum of the energy cost function \\ \hline
$\tilde{\bm{r}} = [\tilde{r}_v, \tilde{r}_\beta]^T$ & difference between $\bm{r}$ and the optimum $\bm{r}^{*}$ \\ \hline
$\bm{\alpha}(\bm{x}, \bm{r})$ & the control law for speed and sideslip \\ \hline
\begin{tabular}[c]{@{}l@{}}$\dot{\bm{x}} = $\\ $\bm{f} (\bm{x}, \bm{\alpha}(\bm{x}, \bm{r}))$\end{tabular} & closed-loop dynamics of speed and sideslip \\ \hline
$y = h(\bm{x})$ & $y$ is the energy cost, $h(\cdot)$ is the cost function \\ \hline
$\bm{l(r)}$ & equilibrium point of $\bm{x}$ \\ \hline
$\bm{p}(t)$, $\bm{d}(t)$ & perturbation signals, demodulation signals \\ \hline
$a_v$, $a_{\beta}$ & perturbation magnitude of ref. speed, sideslip \\ \hline
$\omega_v$, $\omega_{\beta}$ & perturbation frequency of ref. speed, sideslip \\ \hline
$\omega_{hv}$, $\omega_{h\beta}$ & high-pass filters' cutoff frequencies \\ \hline
$\omega_{lv}$, $\omega_{l\beta}$ & low-pass filters' cutoff frequencies \\ \hline
$\eta_v$, $\eta_\beta$ & \begin{tabular}[c]{@{}l@{}}low-frequency components of the cost function, \\ filtered out by the high pass filters\end{tabular} \\ \hline
$\tilde{\eta}_v$, $\tilde{\eta}_\beta$ & \begin{tabular}[c]{@{}l@{}}difference between $\eta_v$ and $\eta_\beta$\\ to optimal cost $(h \circ \bm{l})(\bm{r}^*)$, see \eqref{equ:pertubations}\end{tabular} \\ \hline
$\xi_v$, $\xi_\beta$ & \begin{tabular}[c]{@{}l@{}}product of high-frequency components of the cost\\ function with the demodulation signals\end{tabular} \\ \hline
$q_v$, $q_\beta$ & \begin{tabular}[c]{@{}l@{}}approximations of the cost function's gradient to \\ speed and sideslip\end{tabular} \\ \hline
$m_v$, $m_\beta$ & estimates of the second moments of  $q_v$, $q_\beta$ \\ \hline
$g_v$, $g_\beta$ & output of the step size adapters \\ \hline
$\epsilon$ & \begin{tabular}[c]{@{}l@{}}a small positive constant preventing \\ dividing by zero.\end{tabular} \\ \hline
$\gamma_v$, $\gamma_\beta$ & cut-off frequencies for low-pass filters of  $q_v^2$, $q_\beta^2$ \\ \hline
$k_v$, $k_\beta$ & \begin{tabular}[c]{@{}l@{}}positive constants related to the step-size \\ of gradient descent\end{tabular} \\ \hline
$\delta$, $\omega$ & \begin{tabular}[c]{@{}l@{}}small positive constants used in \\ the stability proof, see \eqref{equ:prime_symbols}\end{tabular} \\ \hline
\begin{tabular}[c]{@{}l@{}}$\omega_v^{'}$, $\omega_\beta^{'}$,  $\omega_{hv}^{'}$, \\ $\omega_{h\beta}^{'}$, $\omega_{lv}^{'}$, $\omega_{l\beta}^{'}$, \\ $k_v^{'}$, $k_\beta^{'}$, $\gamma_v^{'}$, $\gamma_\beta^{'}$\end{tabular} & \begin{tabular}[c]{@{}l@{}}constants used in the proof, related to \\ constants without prime superscript, see \eqref{equ:prime_symbols} \end{tabular} \\ \hline
$\tau = \omega t$ & a time scale used in proof \\ \hline
$\bm{\Bar{p}}(\tau) = \bm{p}(t/\omega)$ & representation of $\bm{p}$ function under time scale $\tau$ \\ \hline
$\Pi$ & \begin{tabular}[c]{@{}l@{}}the least common period of functions with \\ frequencies of  $\omega_v^{'}$ and $\omega_\beta^{'}$\end{tabular} \\ \hline
\end{tabular}
	
	\end{table}

We choose to optimize these two variables because they affect the vehicle's power consumption and are typically additional (redundant) degrees of freedom in a multicopter's flight, where the flight missions require the vehicle to track specified geometric paths.
Because the multicopter is usually not axisymmetric (especially when carrying payloads), flying with different sideslip angles affects the drag force faced by the vehicle and leads to different power consumption.
The sideslip angle can be changed by changing the yaw angle.
The flight speed also affects the power consumption of the vehicle: when the flight speed increases, the power consumption first decreases and then increases, which can be explained by momentum theory \cite[Chapter 2.14]{leishman2006principles}.
This predicts that the maximum flight endurance is achieved by flying at a suitable flight speed, rather than hovering.
\par
When our goal is to achieve the longest flight endurance (time), we want to minimize the consumed energy for a given time.
As a result, the cost function for the optimal endurance flight is defined as the instantaneous electric power ${p_e}$. 
When the goal is to achieve the longest flight range (distance), we want to minimize the energy consumed for a given distance. 
Thus, the cost function for the optimal range flight is the instantaneous electric power over speed ${p_e}/{v}$ (i.e. energy over distance), where $v$ denotes the speed of the vehicle.

\par
A model-free optimization method is preferable, which can handle hard-to-model effects (e.g., components aging and temperature change) and payload changes.
This motivates us to use an extremum seeking controller to find the optimal flight speed and sideslip angle.
The required inputs to the extremum seeking controller are the instantaneous energy cost and a user-defined geometric path. 
Its outputs are the vehicle's reference speed and sideslip angle commands, which are then used to convert the geometric path into a reference trajectory to be tracked by the low-level controllers.

\section{Model-free speed and sideslip adaptation}
\label{sec:method}
In this section, we introduce the novel multivariable extremum seeking controller with adaptive step size.
It is able to achieve faster convergence than the standard extremum seeking controller with a fixed step size, by taking a smaller step size when the estimated gradient has a large magnitude or variance and vice versa.
Vector variables and functions that map to vectors are written in boldface.
Notations in this section are summarized in Table \ref{tab:symbols}.
\subsection{Extremum seeking controller with adaptive step size}
\label{sec:esc}
A block diagram of the proposed adaptive-step-size, multivariable, extremum seeking controller is shown in \figref{fig:escDiagram}.
We define the state variables of the multicopter (relevant to our problem) as $\bm{x} = [v,  \beta]^T$, where $v$ and $\beta$ are the speed and sideslip of the vehicle, respectively.
The outputs of the extremum seeking controller are defined as $\bm{r} = [r_v, r_\beta]^T$, where $r_v$ is the reference flight speed and $r_\beta$ is the reference flight sideslip. 
We assume a smooth control law $\bm{\alpha}(\bm{x}, \bm{r})$, so that the closed-loop dynamics of the speed and sideslip are represented by 
\begin{align}
    \dot{\bm{x}} = \bm{f} (\bm{x}, \bm{\alpha}(\bm{x}, \bm{r})). \label{eq:plant_dynamics}
\end{align}
The cost function is represented by 
\begin{align}
    y = h(\bm{x}).
\end{align}
Like in \cite{esc_stability}, we make the following assumptions about the closed-loop vehicle dynamics and the cost function:
\par
\textbf{Assumption 1.} 
There exists a smooth function $\bm{l}: \mathbb{R}^2 \rightarrow \mathbb{R}^2$ such that $\bm{f} (\bm{x}, \bm{\alpha}(\bm{x}, \bm{r})) = 0$ if and only if $\bm{x} = \bm{l}(\bm{r})$.
\par
\textbf{Assumption 2.} 
For each reference input  $\bm{r}$, the controller ensures that the equilibrium $\bm{x} = \bm{l}(\bm{r})$ is locally exponentially stable uniformly in $\bm{r}$.
\par
Thus, we assume that we have a control law $\bm{\alpha}(\bm{x}, \bm{r})$, that can locally stabilize any of the equilibria that $\bm{r}$ may produce. 
\par
\textbf{Assumption 3.} The cost function (described in \secref{sec:problem_statement}) has a local minimum at $\bm{r}^* = [r_v^*, r_\beta^*]^T$, such that
\begin{equation}
    \triangledown (h \circ \bm{l})(\bm{r}^*) = 0, \quad \triangledown^2 (h \circ \bm{l})(\bm{r}^*) > 0. \label{equ:minimum-nominal-dynamics}
\end{equation}
    
\subsubsection{Gradient estimation}
The extremum seeking controller approximates the gradient of the cost function and integrates the negative of the estimated gradient to minimize the cost \cite{multiESC2}.
To approximate the gradient of the cost function, sinusoidal perturbations 
\begin{align}
    \bm{p}(t) = [a_v\sin(\omega_vt), a_\beta \sin(\omega_\beta t)]^T
\end{align}
are added to the speed setpoint $\hat{r}_v$ and sideslip setpoint $\hat{r}_\beta$, where $a_v$ and $a_\beta$ are the speed and sideslip perturbation magnitudes and the $\omega_v$ and $\omega_\beta$ are the speed and sideslip perturbation frequencies.
\par
The cost function's value $y$ consists of low-frequency components ($\eta_v$ and $\eta_\beta$) and high-frequency components ($y - \eta_v$ and $y - \eta_\beta$).
The cost is first high pass filtered to remove the low-frequency components and retain only the cost changes because of the perturbations.
These values are then multiplied elementwise with the demodulation signals  
\begin{align}
\bm{d}(t) = [\sin(\omega_vt), \sin(\omega_\beta t)]^T,
\end{align}
where the demodulation signals' frequencies $w_v$ and $w_\beta$ are the same as their corresponding perturbation frequencies.
We denote the results of the multiplications as $\xi_v$ and $\xi_\beta$.
If the cost function's value change is in phase with the perturbations, which means that the cost value increases as the inputs' values increase, $\xi_v$ and $\xi_\beta$ will be positive. 
If they are out of phase, the outputs will be negative.
After this, $\xi_v$ and $\xi_\beta$ are sent to low pass filters, whose outputs are approximations of the cost function's gradient, denoted by $q_v$ and $q_\beta$.

\subsubsection{Step size adapter}
\label{subsubsec:step_size_adapter}
The difference between the proposed extremum seeking controller and the standard multivariable extremum seeking controller \cite{esc_stability} is the step size adapters, which are defined as follows:
\begin{align}
    & \dot{m}_v = \gamma_v (q_v^2 - m_v), \quad \dot{m}_\beta = \gamma_\beta (q_\beta^2 - m_\beta), \label{equ:second_moment_lp} \\
    & g_v = \frac{q_v}{\sqrt{m_v + \epsilon}}, \quad g_\beta = \frac{q_\beta}{\sqrt{m_\beta + \epsilon}}, \label{equ:adapter_output} 
\end{align} 
where $ m_v, m_\beta$ are estimates of the second moments of the output of the low pass filters $q_v$ and $q_\beta$, and $\epsilon$ is a small positive constant preventing dividing by zero.  
Equations in \eqref{equ:second_moment_lp} are essentially first-order low-pass filters for $q_v^2$ and $q_\beta^2$, and $\gamma_v$ and $\gamma_\beta$ denote their cut-off frequencies respectively. 
The idea is motivated by the adaptive moment estimation algorithm (Adam) \cite{Adam}, which is commonly used in the stochastic optimization of objective functions in machine learning, such as training neural networks \cite{GAN,allYouNeed}. 
\par
The adapters take in the output of the low pass filters $q_v$ and $q_\beta$ (the gradient estimates), and outputs $g_v$ and $g_\beta$. They are then passed to the integrators to perform gradient descent.
The effective step size for gradient descent is ${k_v}{g_v}/{q_v}$ for the speed optimization and ${k_\beta}{g_\beta}/{q_\beta}$ for the sideslip optimization, and the step size adapters change them by changing $g_v$ and $g_\beta$.
The second moments of the initial outputs from the low pass filters are used to initialize $m_v$ and $m_\beta$ in \eqref{equ:second_moment_lp}.
\par
In \eqref{equ:adapter_output}, by dividing $q_v$ and $q_\beta$ with the square root of their corresponding second moments, the outputs $g_v$ and $g_\beta$ of the adapters will be approximately bounded by $\pm 1$, since $|\mathbb{E}[q_l]|/\sqrt{\mathbb{E}[q_l^2]} \le 1$ ($\mathbb{E}$ denotes expected value, and $q_l$ being either $q_v$ or $q_\beta$).
As a result, the descent rates for speed and slideslip are bounded by $k_v$ and $k_\beta$.
This can be understood as establishing a trust region around the current parameter value, beyond which the current gradient estimation can be inaccurate. 
In addition, the adapters output small values when the gradient estimates have large uncertainty ($m_v$ and $m_\beta$ are large) and vice versa, which makes the controller more robust to noise.
\subsection{Stability Analysis}
\label{subsec:stability_proof}
In this section, we present the stability proof of the novel multivariable extremum seeking controller with adaptive step size through averaging and singular perturbation analysis. 
A similar methodology was used in \cite{esc_stability} to prove the stability of a single variable standard extremum seeking controller and was used in \cite{esc_newton} to prove the stability of a multivariable Newton-based extremum seeking controller.

\subsubsection{System dynamics}
By substituting the setpoint $\bm{r}$ with $\hat{\bm{r}}+\bm{p}(t)$, the closed-loop dynamics of the vehicle in \eqref{eq:plant_dynamics} can be rewritten as 
\begin{equation}
    \begin{split}
    & \dot{\bm{x}} = \bm{f} (\bm{x}, \bm{\alpha}(\bm{x}, \hat{\bm{r}}+\bm{p}(t))). \label{eq:system_dynamics_plant2}
    \end{split}
\end{equation}
The proposed extremum seeking controller's dynamics in \figref{fig:escDiagram} can be summarized as
\begin{equation}
\begin{split}
    &\dot{\hat{r}}_v = -k_v \frac{q_v}{\sqrt{m_v + \epsilon}}, \quad \dot{\hat{r}}_\beta = -k_\beta \frac{q_\beta}{\sqrt{m_\beta + \epsilon}},\\
    &\dot{q}_v = -\omega_{lv}q_v + \omega_{lv} (y - \eta_v) \sin{w_v t}, \\
    &\dot{q}_\beta = -\omega_{l\beta}q_\beta + \omega_{l\beta} (y - \eta_\beta) \sin{w_\beta t}, \\
    &\dot{\eta}_v = -\omega_{hv}\eta_{v} + \omega_{hv} y, \quad \dot{\eta}_\beta = -\omega_{h\beta}\eta_\beta + \omega_{h\beta} y, \\
    &\dot{m}_v = \gamma_v (-m_v + q_v^2), \quad \dot{m}_\beta = \gamma_\beta (-m_\beta + q_\beta^2). \label{eq:system_dynamics_esc}
\end{split}
\end{equation}

The parameters for the extremum seeking controller are selected as 
\begin{equation}
\begin{split}
    &\omega_v=\omega\omega_v^{'}=O(\omega), \quad  \omega_\beta=\omega\omega_\beta^{'}=O(\omega),\\
    &\omega_{hv}=\omega\delta w_{hv}^{'}=O(\omega\delta), \quad \omega_{h\beta}=\omega\delta w_{h\beta}^{'}=O(\omega\delta), \\
    &\omega_{lv}=\omega\delta w_{lv}^{'}=O(\omega\delta), \quad \omega_{l\beta}=\omega\delta w_{l\beta}^{'}=O(\omega\delta),\\ 
    &k_v = \omega\delta k_v^{'} = O(\omega\delta), \quad k_\beta = \omega\delta k_\beta^{'} = O(\omega\delta),\\
    &\gamma_v = \omega\delta \gamma_v^{'} = O(\omega\delta), \quad  \gamma_\beta = \omega\delta \gamma_\beta^{'} = O(\omega\delta), \label{equ:prime_symbols}
\end{split}
\end{equation}
where $\delta$ and $\omega$ are small positive constants, and $\omega_v^{'}$, $\omega_\beta^{'}$, $\omega_{hv}^{'}$, $\omega_{h\beta}^{'}$, $\omega_{lv}^{'}$, $\omega_{l\beta}^{'}$, $k_v^{'}$, $k_\beta^{'}$, $\gamma_v^{'}$ and $\gamma_\beta^{'}$ are positive constants.
In addition, for this multivariable extremum seeking controller to work for both the speed and the sideslip angle  simultaneously, their perturbation frequencies $\omega_v$ and $\omega_\beta$ should be distinct.

For the following averaging and singular perturbation analysis, we use the time scale $\tau = \omega t$. In addition, we define
\begin{equation}
    \label{equ:pertubations}
    \begin{split}
        \tilde{r}_v = \hat{r}_v - r_v^*, &\quad \tilde{r}_\beta = \hat{r}_\beta - r_\beta^*, \\
        \tilde{\eta}_v = \eta_v - (h \circ \bm{l})(\bm{r}^*), &\quad \tilde{\eta}_\beta = \eta_\beta - (h \circ \bm{l})(\bm{r}^*).
    \end{split}
\end{equation}
Then, the system dynamics in \eqref{eq:system_dynamics_plant2} and \eqref{eq:system_dynamics_esc} with small perturbations can be rewritten as:
\begin{align}
    \omega \frac{d \bm{x}}{d\tau} = \bm{f} (\bm{x}, \bm{\alpha}(\bm{x}, \bm{r^*} +\tilde{\bm{r}}+\bm{p}(\tau))), \label{eq:rewritten_plant} 
\end{align}

\begin{align}
    \frac{d}{d \tau} \mmat{\tilde{r}_v \\ \tilde{r}_\beta \\ q_v \\ q_\beta \\ \tilde{\eta}_v \\ \tilde{\eta}_\beta \\ m_v \\ m_\beta} = 
    \delta \mmat{ (-k_v^{'} q_v)/\sqrt{m_v + \epsilon} \\[3pt]
                  (-k_\beta^{'} q_\beta)/\sqrt{m_\beta + \epsilon} \\[3pt]
                  \omega_{lv}^{'} (y - (h \circ \bm{l})(\bm{r}^*) - \tilde{\eta}_v) \sin{w_v^{'} \tau} - \omega_{lv}^{'}q_v \\[3pt]
                  \omega_{l\beta}^{'} (y - (h \circ \bm{l})(\bm{r}^*) - \tilde{\eta}_\beta) \sin{w_\beta^{'} \tau} - \omega_{l\beta}^{'}q_\beta \\[3pt]
                  -\omega_{hv}^{'}\tilde{\eta}_v + \omega_{hv}^{'} (y - (h \circ \bm{l})(\bm{r}^*)) \\[3pt]
                  -\omega_{h\beta}^{'}\tilde{\eta}_v + \omega_{h\beta}^{'} (y - (h \circ \bm{l})(\bm{r}^*)) \\[3pt]
                  \gamma_v^{'} (-m_v + q_v^2) \\[3pt]
                  \gamma_\beta^{'} (-m_\beta + q_\beta^2)
                } \label{equ:rewritten_esc}
\end{align}
where $\tilde{\bm{r}} = [\tilde{r}_v, \tilde{r}_\beta]^T$, $\bm{\Bar{p}}(\tau) = \bm{p}(t/\omega)$.

\begin{figure*}[ht]
	\begin{center}    \includegraphics[width =  \linewidth]{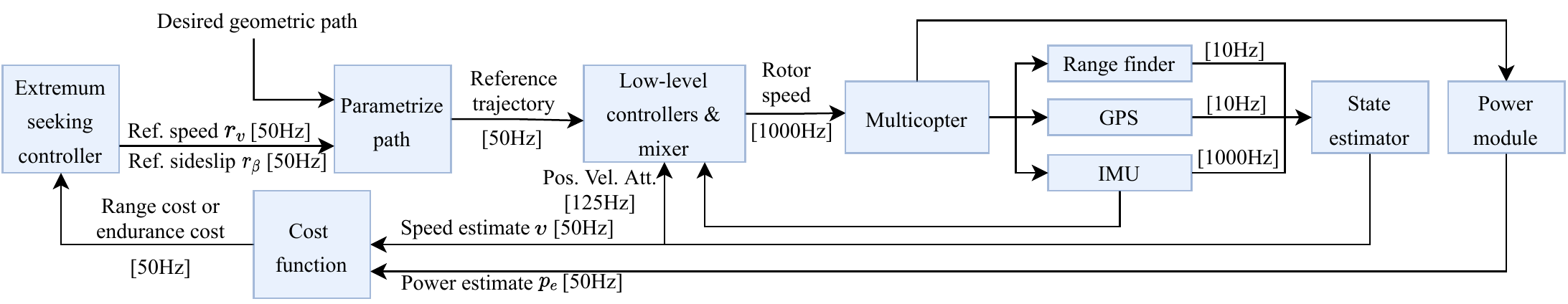}
	\end{center}
	\vspace{-1ex}
	\caption{Control architecture for the model-free adaptive flight range or endurance optimization of a multicopter. The details of the extremum seeking controller block is shown in the dashed rectangle in \figref{fig:escDiagram}. The extremum seeking controller runs on the onboard computer (Jetson Nano) while the low-level controllers and the state estimator run on the Pixracer flight controller.}
	\label{fig:escFullDiagram}
\end{figure*}

\subsubsection{Averaging analysis}
\label{sec:averaging}
We first freeze the dynamics of the vehicle \eqref{eq:system_dynamics_plant2} at its equilibrium point $\bm{x} = \bm{l}(\bm{r}^* + \tilde{\bm{r}} + \bm{\Bar{p}}(\tau))$, substitute it into \eqref{equ:rewritten_esc} and get the reduced system
\begin{align}
    \frac{d}{d \tau} \mmat{\tilde{r}_v \\ \tilde{r}_\beta \\ q_v \\ q_\beta \\ \tilde{\eta}_v \\ \tilde{\eta}_\beta \\ m_v \\ m_\beta} = 
    \delta \mmat{ (-k_v^{'} q_v)/\sqrt{m_v + \epsilon} \\[3pt]
                  (-k_\beta^{'} q_\beta)/\sqrt{m_\beta + \epsilon} \\[3pt]
                  \omega_{lv}^{'} (v(\bm{\tilde{r}} + \bm{\Bar{p}}(
                  {\tau})) - \tilde{\eta}_v) \sin{w_v^{'} \tau} - \omega_{lv}^{'}q_v \\[3pt]
                  \omega_{l\beta}^{'} (v(\bm{\tilde{r}} + \bm{\Bar{p}}(\tau)) - \tilde{\eta}_\beta) \sin{w_\beta^{'} \tau} - \omega_{l\beta}^{'}q_\beta \\[3pt]
                  -\omega_{hv}^{'}\tilde{\eta}_v + \omega_{hv}^{'} v(\bm{\tilde{r}} + \bm{\Bar{p}}(\tau)) \\[3pt]
                  -\omega_{h\beta}^{'}\tilde{\eta}_v + \omega_{h\beta}^{'} v(\bm{\tilde{r}} + \bm{\Bar{p}}(\tau)) \\[3pt]
                  \gamma_v^{'} (-m_v + q_v^2) \\[3pt]
                  \gamma_\beta^{'} (-m_\beta + q_\beta^2)
                }, \label{equ:reduced_model}
\end{align}
\\
where $v(\bm{\tilde{r}} +\bm{\Bar{p}}(\tau)) = (h \circ \bm{l})(\bm{r}^* + \tilde{\bm{r}} + \bm{\Bar{p}}(\tau)) - (h \circ \bm{l})(\bm{r}^*)$. From Assumption 3 we have that: 
\begin{equation}
    v(0) = 0, \ \triangledown v(0) = 0, \ \triangledown^2 v(0) > 0. \label{minimum-perturbation-dynamics}
\end{equation}
To provide compact notations, we denote $\triangledown^2 v(0) = H$ for later discussion. 
The least common period of sinusoidal functions with frequencies of $\omega_v^{'}$ and $ \omega_\beta^{'}$ is defined as $\Pi$.
We first prove the stability of the reduced system using averaging analysis:
\begin{prop}
\label{prop:neighborhood-convergence}
For the reduced system \eqref{equ:reduced_model}, under Assumption 3, there exists $ \Bar{a}$ and $\Bar{\delta}$ such that for all $ \mnorm{\bm{a}} \in (0, \Bar{a})$, $ \delta \in (0, \Bar{\delta})$, the reduced system dynamics \eqref{equ:reduced_model} have a unique exponentially stable periodic solution of period $\Pi$, which for all $ \tau > 0$
\begin{align}
\begin{split}
    &\left|\tilde{r}^{\Pi}_v(\tau) \right| \le O(\delta + \mnorm{\bm{a}}^2), \; \; \left|\tilde{r}^{\Pi}_\beta(\tau) \right| \le O(\delta + \mnorm{\bm{a}}^2), \\
    &\left|\tilde{\eta}^{\Pi}_v(\tau) \right| \le O(\delta + \mnorm{\bm{a}}^2), \; \; \left|\tilde{\eta}^{\Pi}_\beta(\tau)\right| \le O(\delta + \mnorm{\bm{a}}^3), \\
    &\left| q^{\Pi}_v \right| \le O(\delta), \; \; \left| q^{\Pi}_\beta \right| \le O(\delta), \\
    &\left| m^{\Pi}_v \right| \le O(\delta), \; \; \left| m^{\Pi}_\beta \right| \le O(\delta).
\end{split}
\end{align}
\end{prop}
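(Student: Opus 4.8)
The plan is to recognize the reduced system \eqref{equ:reduced_model} as an instance of classical (periodic) averaging and then transfer an exponential-stability statement from the averaged system back to the actual one. Stacking the eight states into $\bm{z} = [\tilde{r}_v, \tilde{r}_\beta, q_v, q_\beta, \tilde{\eta}_v, \tilde{\eta}_\beta, m_v, m_\beta]^T$, every entry of \eqref{equ:reduced_model} carries the common factor $\delta$, so the system has the form $d\bm{z}/d\tau = \delta\, \bm{G}(\tau, \bm{z}; \bm{a})$ with $\bm{G}$ being $\Pi$-periodic in $\tau$ (since $\bm{\Bar{p}}$ is, and $\Pi$ is the least common period of the two perturbations). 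This is precisely the setting in which $\delta$ is the small averaging parameter, as exploited in \cite{esc_stability} and \cite{esc_newton}.

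First I would form the averaged vector field $\bm{\Bar{G}}(\bm{z};\bm{a}) = \frac{1}{\Pi}\int_0^\Pi \bm{G}(\tau,\bm{z};\bm{a})\,d\tau$, writing $\langle\cdot\rangle$ for the average over one period $\Pi$. Using the Taylor expansion $v(\tilde{\bm{r}}+\bm{\Bar{p}}) = \frac{1}{2}(\tilde{\bm{r}}+\bm{\Bar{p}})^T H (\tilde{\bm{r}}+\bm{\Bar{p}}) + O(\mnorm{\cdot}^3)$ guaranteed by \eqref{minimum-perturbation-dynamics}, together with the elementary averages $\langle\sin(\omega_v^{'}\tau)\rangle = 0$, $\langle\sin^2(\omega_v^{'}\tau)\rangle = \frac{1}{2}$, and $\langle\sin(\omega_v^{'}\tau)\sin(\omega_\beta^{'}\tau)\rangle = 0$ (valid because $\omega_v^{'}\neq\omega_\beta^{'}$), the central computation is the demodulation average $\langle v(\tilde{\bm{r}}+\bm{\Bar{p}})\sin(\omega_v^{'}\tau)\rangle = \frac{a_v}{2}(H\tilde{\bm{r}})_v + O(\mnorm{\bm{a}}^3)$ plus higher-order terms in $\mnorm{\tilde{\bm{r}}}$. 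This shows the filtered, demodulated cost recovers the gradient $H\tilde{\bm{r}}$ of $v$ to leading order. The high-pass channels average to $\langle v(\tilde{\bm{r}}+\bm{\Bar{p}})\rangle = O(\mnorm{\tilde{\bm{r}}}^2 + \mnorm{\bm{a}}^2)$, and the second-moment channels are unchanged since $q_v, q_\beta, m_v, m_\beta$ carry no explicit $\tau$ dependence.

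Next I would locate the equilibrium $\bm{z}^e(\bm{a})$ of the averaged system. The $\tilde{r}$-equations force $q_v^e = q_\beta^e = 0$; the $m$-equations then give $m_v^e = m_\beta^e = 0$ exactly; the $q$-equations reduce to $\langle v(\tilde{\bm{r}}^e+\bm{\Bar{p}})\sin(\omega_l^{'}\tau)\rangle = 0$, i.e. $\frac{1}{2}\,\mathrm{diag}(a_v,a_\beta)\,H\tilde{\bm{r}}^e + O(\mnorm{\bm{a}}^2) = 0$, so invertibility of $H$ (positive definite by Assumption 3) yields $\tilde{\bm{r}}^e = O(\mnorm{\bm{a}}^2)$; finally $\tilde{\eta}^e = \langle v(\tilde{\bm{r}}^e + \bm{\Bar{p}})\rangle = O(\mnorm{\bm{a}}^2)$. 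Invoking the averaging theorem (as applied in \cite{esc_stability}), an exponentially stable equilibrium of the averaged system implies that for $\delta$ small enough the $\Pi$-periodic system has a unique exponentially stable $\Pi$-periodic solution $\bm{z}^\Pi(\tau)$ lying within $O(\delta)$ of $\bm{z}^e$ for all $\tau$. Matching the $O(\delta)$ closeness against the orders of $\bm{z}^e$ just computed then reproduces every stated bound; in particular $|q_l^\Pi|, |m_l^\Pi| \le O(\delta)$ because $q_l^e = m_l^e = 0$ exactly.

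The step I expect to be the main obstacle is proving that the Jacobian $\partial \bm{\Bar{G}}/\partial \bm{z}$ at $\bm{z}^e$ is Hurwitz, which the averaging theorem requires. Unlike the scalar case of \cite{esc_stability}, the gradient-descent block is now $2\times 2$: eliminating $q$ (low-pass) against $\tilde{\bm{r}}$ (integrator) produces an effective linearization proportional to $-\,\mathrm{diag}(k_v^{'},k_\beta^{'})\,\mathrm{diag}(a_v,a_\beta)\,H/(2\sqrt{\epsilon})$, a product of positive diagonal matrices with the positive-definite $H$, hence Hurwitz. I would make this rigorous by exhibiting the block structure (the $\tilde{\eta}$ and $m$ subsystems are each self-stable with decay rates $\omega_h^{'}$ and $\gamma^{'}$ and couple only weakly into the gradient block) and concluding via a block-triangular/time-scale-separation argument, or a direct eigenvalue estimate, that the full $8\times 8$ matrix is Hurwitz. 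The feature specific to this controller is the adaptive normalization $1/\sqrt{m+\epsilon}$: because $\epsilon>0$ it is smooth and strictly positive at $m^e = 0$, so it merely rescales the gradient block by the positive factor $1/\sqrt{\epsilon}$ and leaves the Hurwitz property intact — which is exactly where the $\epsilon$ introduced in \eqref{equ:adapter_output} earns its keep.
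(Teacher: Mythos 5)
Your proposal follows essentially the same route as the paper's appendix: classical periodic averaging in $\delta$, locating the averaged equilibrium ($q^{e}=m^{e}=0$, $\tilde{\bm{r}}^{e}=O(\mnorm{\bm{a}}^2)$, $\tilde{\eta}^{e}=O(\mnorm{\bm{a}}^2)$) by Taylor-expanding $v$ and using the demodulation averages, verifying that the Jacobian at that equilibrium is Hurwitz, and then invoking Khalil's averaging theorem to obtain the unique exponentially stable $\Pi$-periodic solution with the stated bounds. The only difference is cosmetic and lies in the Hurwitz step: the paper exploits the block-lower-triangular structure of the Jacobian and applies Routh--Hurwitz to the quartic characteristic polynomial of the $4\times 4$ $(\tilde{\bm{r}},q)$-block, whereas you propose a small-gain/time-scale-separation estimate on that same block; both arguments hinge on the identical leading-order term $-\mathrm{diag}(k_v^{'},k_\beta^{'})\,\mathrm{diag}(a_v,a_\beta)\,H/(2\sqrt{\epsilon})$ and the positive definiteness of $H$, so your plan is sound as written.
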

\begin{proof}
The proof of Proposition \ref{prop:neighborhood-convergence} is shown in the appendix at the end of this paper.
\end{proof}
This implies that the error terms $\tilde{r}^{\Pi}_v(\tau)$ and $\tilde{r}^{\Pi}_\beta(\tau)$ converge to an $O(\delta + \mnorm{\bm{a}}^2)$ neighbourhood of zero.
The flight speed and sideslip found by the extremum seeking controller are periodic and converge to an $O(\delta + \mnorm{\bm{a}}^2)$ neighbourhood of their optimal values $r_v^*$ and $r_\beta^*$ (i.e. values that minimize the cost functions defined in \secref{sec:problem_statement}). 
\subsubsection{Singular perturbation analysis}
We then analyze the full system \eqref{eq:rewritten_plant} and \eqref{equ:rewritten_esc}.
To provide compact notations, we define the state vector of the extremum seeking controller as $\bm{z} = [\tilde{r}_v, \tilde{r}_\beta, q_v, q_\beta, \tilde{\eta}_v, \tilde{\eta}_\beta, m_v, m_\beta]^T$, and write \eqref{equ:rewritten_esc} as 
\begin{equation}
    \frac{d\bm{z}}{d\tau} = \delta \bm{E}(\tau, \bm{x}, \bm{z}). \label{eq:rewritten_esc2}
\end{equation}
By Proposition 1, there exists an exponentially stable periodic solution $\bm{z}^{\Pi}(\tau)$ such that
\begin{equation}
    \frac{d \bm{z}^{\Pi}(\tau)}{d\tau} = \delta \bm{E}(\tau, \bm{L}(\tau, \bm{z}^{\Pi}(\tau)), \bm{z}^{\Pi}(\tau)).
\end{equation}
where $\bm{L}(\tau, \bm{z}^{\Pi}(\tau)) = \bm{l}(\bm{r}^* + \tilde{\bm{r}} + \bm{\Bar{p}}(\tau))$.
To convert the system \eqref{eq:rewritten_plant} and \eqref{eq:rewritten_esc2} into the standard singular perturbation form, we shift the state $\bm{z}$ to get $\tilde{\bm{z}} = \bm{z} - \bm{z}^{\Pi}(\tau)$ such that 
\begin{align}
    &\omega \frac{d\bm{x}}{d\tau} = \tilde{\bm{F}}(\tau, \bm{x}, \tilde{\bm{z}}),\\
    &\frac{d \tilde{\bm{z}}}{d\tau} = \delta \tilde{\bm{E}}(\tau, \bm{x}, \tilde{\bm{z}}) \label{eq:shifted_esc_model}. 
\end{align}
where
\begin{align}
    &\tilde{\bm{E}}(\tau, \bm{x}, \tilde{\bm{z}}) := \bm{E}(\tau, \bm{x}, \tilde{\bm{z}}+\bm{z}^{\Pi}(\tau)) - \bm{E}(\tau, \bm{L}(\tau, \bm{z}^{\Pi}(\tau)), \bm{z}^{\Pi}(\tau)) \nonumber \\
    &\tilde{\bm{F}}(\tau, \bm{x}, \tilde{\bm{z}}) := \bm{f} (\bm{x}, \bm{\alpha}(\bm{x}, \bm{r^*} +\tilde{\bm{r}}+\bm{p}(\tau))). \nonumber
\end{align}
The quasi-steady state is 
\begin{equation}
    \bm{x} = \bm{L}(\tau, \tilde{\bm{z}} + \bm{z}^{\Pi}(\tau)).
\end{equation} 
By substituting the quasi-steady state into \eqref{eq:shifted_esc_model} and we get the reduced model
\begin{equation}
    \frac{d \tilde{\bm{z}}}{d \tau} = \delta \tilde{\bm{E}}(\tau,  \bm{L}(\tau, \tilde{\bm{z}} + \bm{z}^{\Pi}(\tau)), \tilde{\bm{z}}), \label{eq:reduced_model}
\end{equation}
which has an equilibrium at the origin $\tilde{\bm{z}} = 0$. The equilibrium has been shown to be exponentially stable in the proof of Proposition 1. 
In addition, we study the stability of the boundary layer model (in the time scale $t = \tau / \omega$)
\begin{align}
    \frac{d \bm{x}_b}{d t} &= \tilde{\bm{F}}(\tau, \bm{x}_b + \bm{L}(\tau, \tilde{\bm{z}} + \bm{z}^{\Pi}(\tau)), \tilde{\bm{z}}) \\
    &= \bm{f}(\bm{x}_b + \bm{l}(\bm{r}), \bm{\alpha}(\bm{x}_b + \bm{l}(\bm{r}), \bm{r})). \label{eq:boundary_layer_model}
\end{align}
Since $\bm{f}(\bm{l}(\bm{r}), \bm{\alpha}(\bm{l}(\bm{r}), \bm{r})) = 0$ according to Assumption 1, $\bm{x}_b = 0$ is the equilibrium of the boundary layer model \eqref{eq:boundary_layer_model}. 
By Assumption 2, this equilibrium is locally exponentially stable uniformly in $\bm{r}$.
\par
Combining the exponential stability of the reduced model with the exponential stability of the boundary layer model, and using Tikhonov's theorem on the infinite interval \cite[Chapter 11.3]{khalil2002nonlinear}, we can conclude that the solution of \eqref{eq:rewritten_esc2} is $O(\omega)$-close to the solution of the reduced model \eqref{eq:reduced_model}.
Using the results of Proposition 1, we can then conclude that the error terms $\tilde{r}^{\Pi}_v(\tau)$ and $\tilde{r}^{\Pi}_\beta(\tau)$ converge to an $O(\omega + \delta + \mnorm{\bm{a}}^2)$ neighbourhood of zero.
\par
In summary, the proposed extremum seeking controller is locally stable -- starting from an initial condition near the cost function's local minimum, it will converge to a neighbourhood around that local minimum if the perturbation is sufficiently small and slow relative to the closed-loop dynamics of the vehicle, and if the Assumptions 1-3 hold.

\section{Experimental results}
\label{sec:results}
\begin{figure*}[!htp]
    \centering
    \subfigure[Range cost with box payload.]
    {
        \includegraphics[width =  0.23\linewidth]{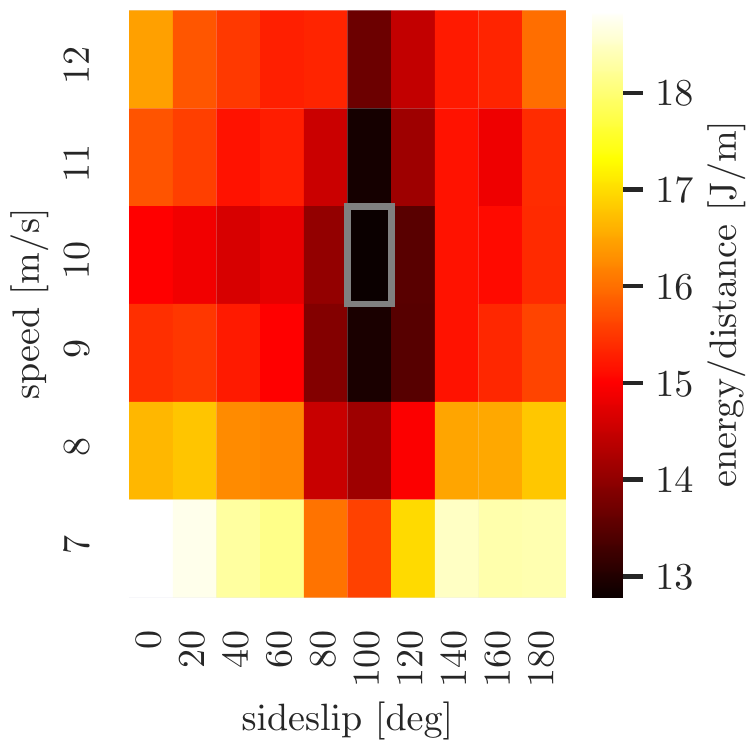}
        \label{subfig:groundTruthRangeWithBox}
    }
    \subfigure[Endurance cost with box payload.]
    {
        \includegraphics[width =  0.23\linewidth]{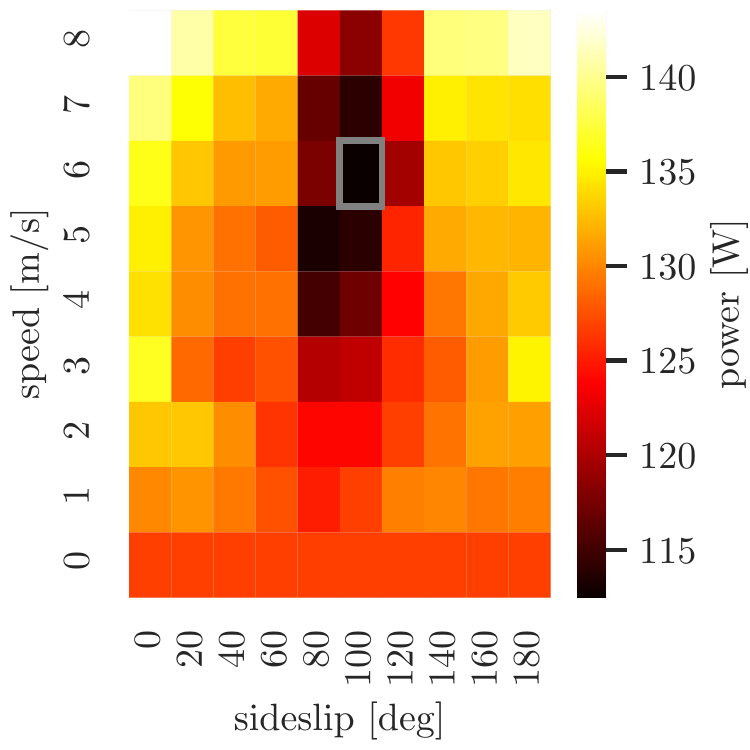}
        \label{subfig:groundTruthEnduranceWithBox}
    }
    \subfigure[Range cost with no box payload.]
    {
        \includegraphics[width =  0.23\linewidth]{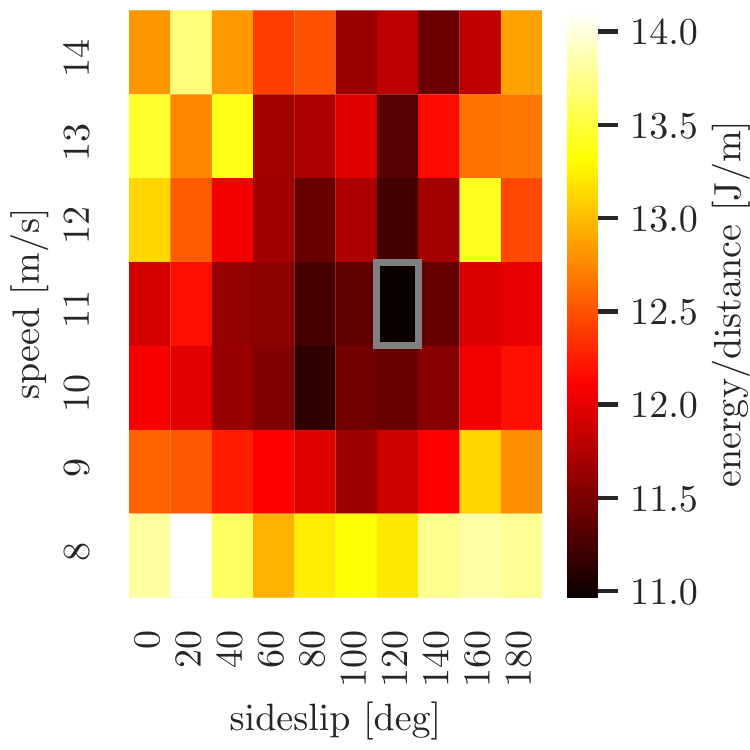}
        \label{subfig:groundTruthRangeNoBox}
    }
    \subfigure[Endurance cost with no box payload.]
    {
        \includegraphics[width =  0.23\linewidth]{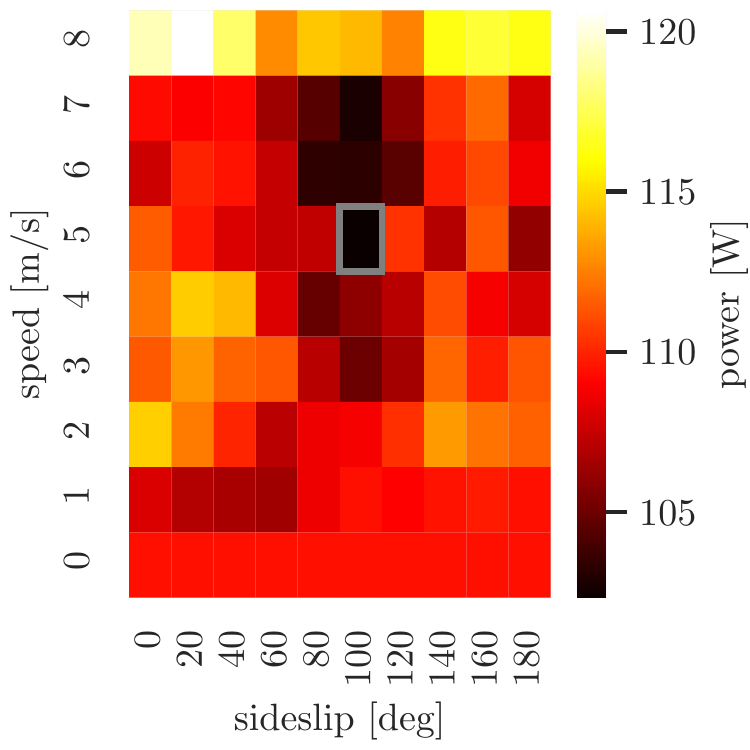}
        \label{subfig:groundTruthEnduranceNoBox}
    }
    \vspace{-1ex}
    \caption{Ground truth data of the cost functions' values, with and without an additional box payload. 
    Each square in the heat maps corresponds to 20 seconds' data collected at 50Hz.
    The optimal value in each case is encircled with a grey rectangle.
    (a) The range cost with the box payload reaches its minimum at about 10 m/s in speed and 100 degrees in sideslip.
    (b) The endurance cost with the box payload reaches its minimum at about 6 m/s and 100 degrees in sideslip.
    (c) The range cost with no box payload reaches its minimum at about 11 m/s and 120 degrees in sideslip.
    (d) The endurance cost with no box payload reaches its minimum at about 5 m/s and 100 degrees in sideslip.
    }
    \label{fig:groundTruth}
\end{figure*}

Outdoor experiments were conducted to demonstrate  the effectiveness of the extremum seeking controller with adaptive step size to find the optimal flight speed and sideslip.
The proposed method was shown to have better convergence speed than the standard extremum seeking control.
It was also able to converge in the present of strong wind disturbances.
An experiment video can be found at \url{https://youtu.be/aLds8LVfogk}.

\par
We want to note the significance of the outdoor experiments in this paper compared to indoor experiments in our previous work of \cite{previous_paper_1, previous_paper_2}:
\begin{enumerate}
    \item In our previous work, a motion caption system was used to measure the vehicle's position and attitude at very high accuracy (about 1 mm error for position and 1 degree error for attitude) and at 200 Hz frequency.
    In contrast, in the outdoor experiments, a GPS was used for position estimation, whose accuracy was at meter level with a much lower frequency (10 Hz).
    As motion capture systems are not available in most of the real-world applications, this new sensor setup with GPS shows that our proposed method is able to perform well under much larger state estimation variances compared with indoor experiments.
    \item Because of limited space, the multicopter was only able to fly a circular path of 2 m radius in previous indoor experiments. 
    The centripetal force increased dramatically as the flight speed increased for such a small radius, contributing largely to the power consumption.
    Such an experimental setup is rare in real-world applications such as package delivery or surveillance, and made the vehicle's power consumption increase almost monotonically as the speed increased.
    In outdoor experiments, the centripetal force became much smaller due to much larger flight radius -- a more realistic experiment setup.
    We were thus also able to find the speed and sideslip for optimal endurance flight, as the power as a function of speed and sideslip has a much deeper minimum.
    \item Experiments were conducted both on light wind and windy days, to see the effect of wind disturbances on the proposed method. Such real-world effects were not possible indoors.
    \item We used the standard, off-the-shelf PX4 firmware for the low-level control and state estimation of the vehicle, instead of using a custom firmware and control stack in our previous work. 
    This demonstrates the ability of the proposed method to be easily deployed on existing multicopters.
\end{enumerate}
\subsection{Experiment setup}
The experiments were performed with a custom-built quadcopter with and without a box payload (as shown in \figref{fig:front}).
The weight of the vehicle without the box payload was 0.9 kg, and the box weighs 0.1 kg and has a size of 180$\times$115$\times$80 mm. 
The distance between the hubs of the two  diagonal motors is 330 mm and the propeller is 203 mm in diameter. 
The extremum seeking controller was run on an onboard computer (Jetson Nano), and an  mRo Pixracer R15 flight controller ran the standard PX4 firmware \cite{px4_firmware} including the state estimator and low-level controllers. 
The low-level cascaded PID controller (corresponds to $\bm{\alpha}(\bm{x}, \bm{r})$ in Assumption 2) stabilizes the vehicle and thus satisfies Assumption 2 in \secref{sec:esc}.
Other low-level controllers satisfying Assumption 2 could also be used with our proposed method.
The Jetson Nano and the Pixracer communicate through a UART link using mavros. 
The main reasons for running the extremum seeking controller on the onboard computer are for easier data logging and implementation.
The computational power of micro controllers such as the Pixracer should also be able to run this algorithm, as it only requires several simple operations as shown in \figref{fig:escDiagram}. 
Removing the onboard computer could further save the energy, at the cost of not being to log data as easily.
The experiments were conducted at a flat grass field at the Richmond Field Station, Richmond, CA (37.916588 N, -122.336667 E).

\par
The control architecture for the vehicle is shown in \figref{fig:escFullDiagram}.
The extremum seeking controller (with or without adaptive step size) takes in the desired geometric path and instantaneous range cost or endurance cost.
The power measurement $p_e$ comes from a power module (Holybro PM06 v2) connected to the battery, and the speed measurement $v$ comes from a state estimator based on a GPS (Zubax GNSS 2), a range finder for measuring the flight height (Beneware TFmini-S) and an IMU (Invensense MPU-9250). 
The extremum seeking controller outputs the reference tangential speed $r_v$ and sideslip $r_\beta$ along the desired path, which are used to parameterize the geometric path into a reference trajectory.
The reference trajectory is then tracked by the low-level position and attitude controller, which is a cascaded PID controller.
\par
The range of flight speed was 0-12 m/s when carrying the box payload and was 0-15 m/s without payload. 
The sideslip angle is a periodic variable, whose period is $180^\circ$, due to the vehicle and payload's rotational symmetry.


\subsection{Extremum seeking parameter selection}
\label{sec:esc_parameter_selection}
The values of parameters of the standard extremum seeking controller and our proposed adaptive step size extremum seeking controller used throughout the experiments are shown in Table \ref{tab:paramter}.
The perturbation frequencies ($w_v$ and  $w_\beta$), perturbation magnitudes ($a_v$ and $a_\beta$), gains for the integrator ($k_v$ and $k_\beta$), cutoff frequencies of high-pass ($w_{hv}$ and $w_{h\beta}$) and low-pass filters ($w_{lv}$ and $w_{l\beta}$) need to be selected properly to achieve good performance of the extremum seeking controllers. 
The guidelines for choosing them are detailed below:

\subsubsection{Perturbation frequencies} 
The perturbation frequencies must be slow compared with the closed-loop dynamics of the quadcopter ($\omega$ should be small as mentioned in the stability analysis), such that they can be well  tracked by the vehicle.
Mathematically, the perturbation frequency could be selected smaller than the dominant frequency of the vehicle's closed-loop dynamics.
The perturbation frequencies can be increased to achieve a faster convergence rate \cite{ditheringSignalStudy}, given they can be tracked well by the vehicle. 
In addition, the multivariable extremum seeking control requires distinct perturbation frequencies for the speed and sideslip angle.
\subsubsection{Perturbation magnitudes and integrator gains} 
Large values for the perturbation magnitudes will be helpful for faster convergence, but will increase the oscillation magnitudes. 
Large values for the integrator gains will also be helpful for faster convergence, but will make the controller more sensitive to disturbances.
As a result, we can increase the perturbation magnitudes and integrator gains to obtain the fastest convergence speed for a permissible amount of oscillation and sensitivity. 
\subsubsection{Cutoff frequencies of the high-pass and low-pass filters} The cutoff frequencies of the high-pass and low-pass filters should be designed based on their corresponding perturbation frequencies: the cutoff frequency of the high-pass filter should be set higher than the perturbation frequency ($w_{hv}{\ge}w_v$ and $w_{h\beta}{\ge}w_v$), and the cutoff frequency of the low-pass filter should be set lower than the perturbation frequency ($w_{lv}{\le}{w_\beta}$ and $w_{l\beta}{\le}{w_\beta}$), to prevent attenuation of measurements at the perturbation frequency. 
We set the cutoff frequencies of the high-pass and low-pass filters to be the same as their corresponding perturbation frequencies, which simplified the parameter tuning process and was found to work well in the experiments.
\subsubsection{Step-size adapter cutoff frequency} The two parameters in the step size adapters $\gamma_v$ and $\gamma_\beta$ are cutoff frequencies for the low-pass filters of the square for estimated gradient $q_v^2$ and $q_\beta^2$. 
One could increase their values as long as  the noises are sufficiently attenuated.
\par
In general, the selection of the extremum seeking parameters is a tuning process, but the guidelines above are valuable for making parameter tuning effectively. 
\par
To make a fair comparison between the standard and the proposed extremum seeking controller, we kept all parameters for the two different methods to be the same except $k_v$ and $k_\beta$, since they have different meanings for the two methods: the $k_v$ and $k_\beta$ values are the step sizes for the standard method but are only part of the step sizes for the adaptive method, as shown in \secref{subsubsec:step_size_adapter}. 
They were empirically tuned in experiments for the two different methods to each achieve the fastest convergence rate in optimal range speed and sideslip searching when carrying a box payload \ref{subfig:escRangeWithBox}.

\begin{table}[ht]
	\renewcommand{\arraystretch}{1.2}
	\caption{Values of extremum seeking parameters}
	\vspace{-1ex}
	\label{tab:paramter}
	\centering
    \begin{tabular}{|c|cc|}
    \hline
    Parameter & \multicolumn{1}{c|}{Standard method} & Proposed method \\ \hline
    $a_v$ & \multicolumn{2}{c|}{0.5 m/s} \\ \hline
    $\omega_v, \omega_{hv}, \omega_{lv}$ & \multicolumn{2}{c|}{1 rad/s} \\ \hline
    $a_\beta$ & \multicolumn{2}{c|}{$10^\circ$} \\ \hline
    $\omega_\beta, \omega_{h\beta}, \omega_{l\beta}$ & \multicolumn{2}{c|}{0.5 rad/s} \\ \hline
    $k_v$ & \multicolumn{1}{c|}{0.05} & 0.11 \\ \hline
    $k_\beta$ & \multicolumn{1}{c|}{0.04} & 0.04 \\ \hline
    $\gamma_v, \gamma_\beta$ & \multicolumn{1}{c|}{N/A} & 0.5 rad/s \\ \hline
    \end{tabular}
\end{table}

\subsection{Performance comparison under light wind}
\label{sec:compare_windstill}
In the comparison experiments, the quadcopter was commanded to fly along a circular path with 30 meters in radius and a constant height of 5 meters.
The circular path was chosen for a simple and intuitive comparison, as well as easy experimental implementation, while our proposed method is also applicable to sufficiently smooth geometric paths with more complicated shapes.
The experiments were conducted during good weather to minimize the effect of wind disturbances.
\begin{figure*}[!htp]
    \centering
    \subfigure[Carrying an additional box payload.]
    {
        \includegraphics[width =  0.48\linewidth]{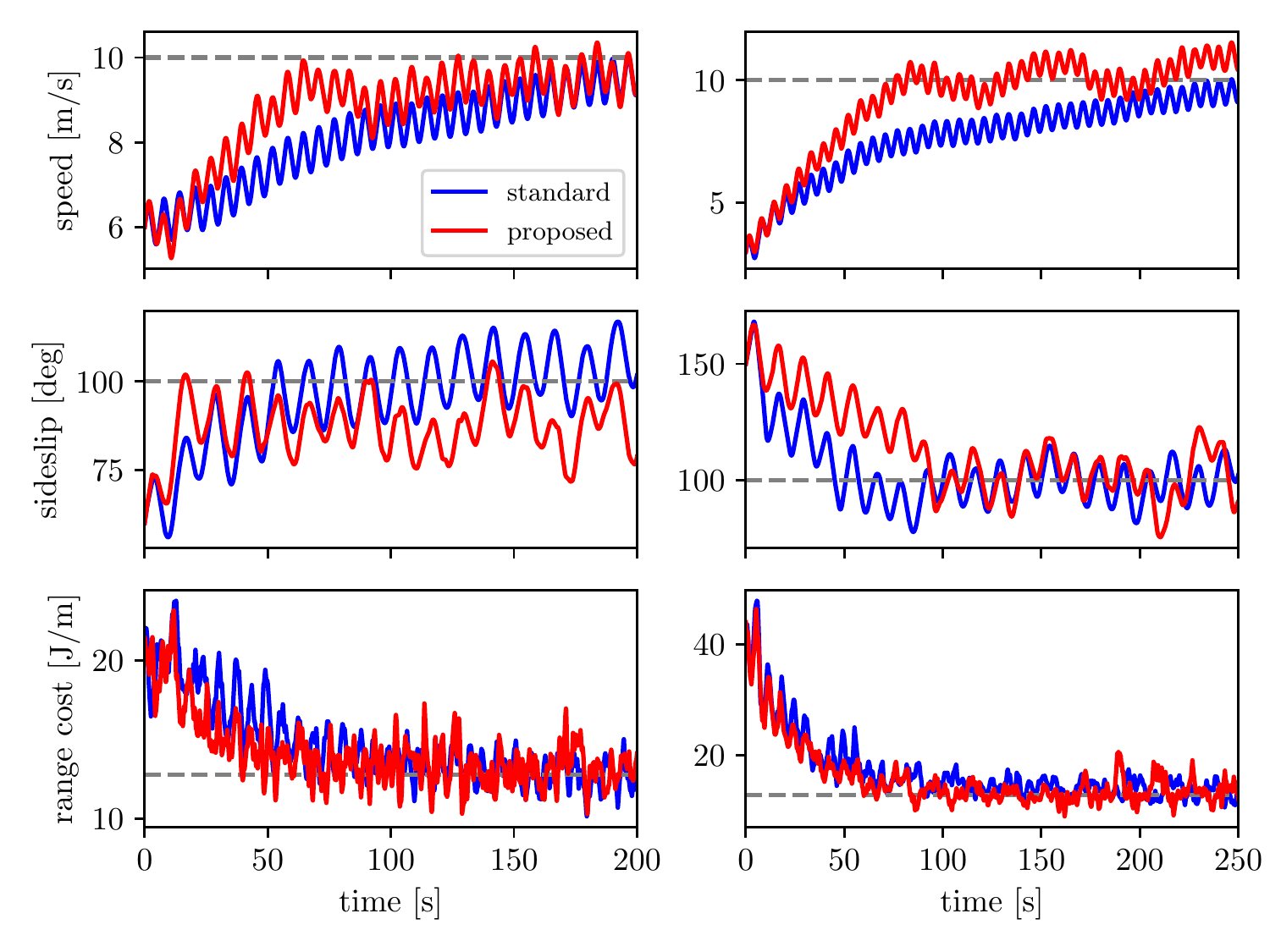}
        \label{subfig:escRangeWithBox}
    }
    \subfigure[Without carrying an additional box payload.]
    {
        \includegraphics[width =  0.48\linewidth]{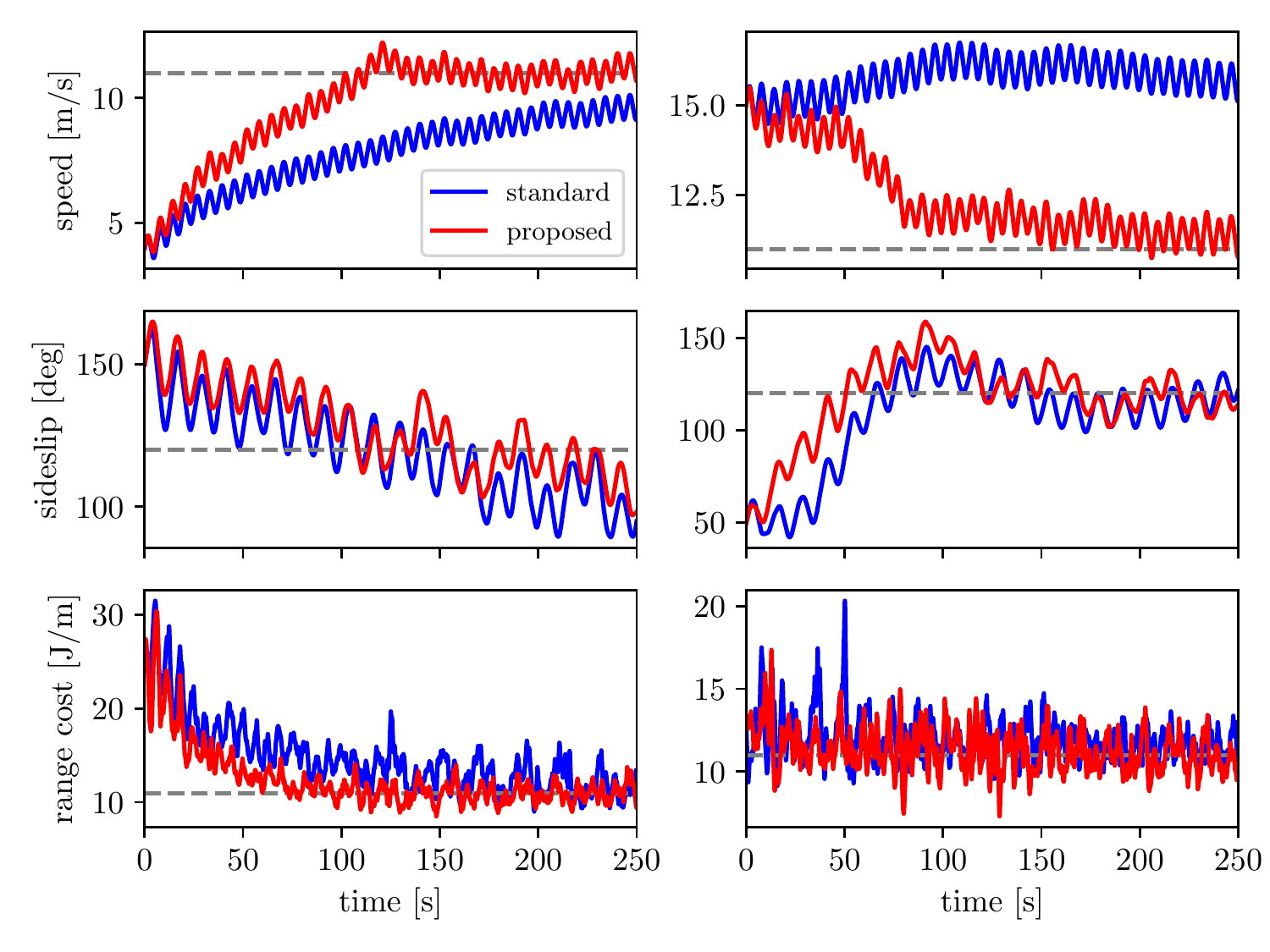}
        \label{subfig:escRangeWithoutBox}
    }
    \vspace{-1ex}
    \caption{Optimal range speed and sideslip searching performance comparison between the proposed method (red lines) and the standard method (blue lines). The ground truth values for optimal speed and sideslip are marked as grey dashed lines (values from \figref{fig:groundTruth}). The results when carrying an additional box payload are shown in (a) and the results with no additional box payload are shown in (b). Each column in the subfigures represent a test with a different initial speed and sideslip.}
    \label{fig:escRange}
\end{figure*}

\begin{figure*}[!htp]
    \centering
    \subfigure[Carrying an additional box payload.]
    {
        \includegraphics[width = 0.48 \linewidth]{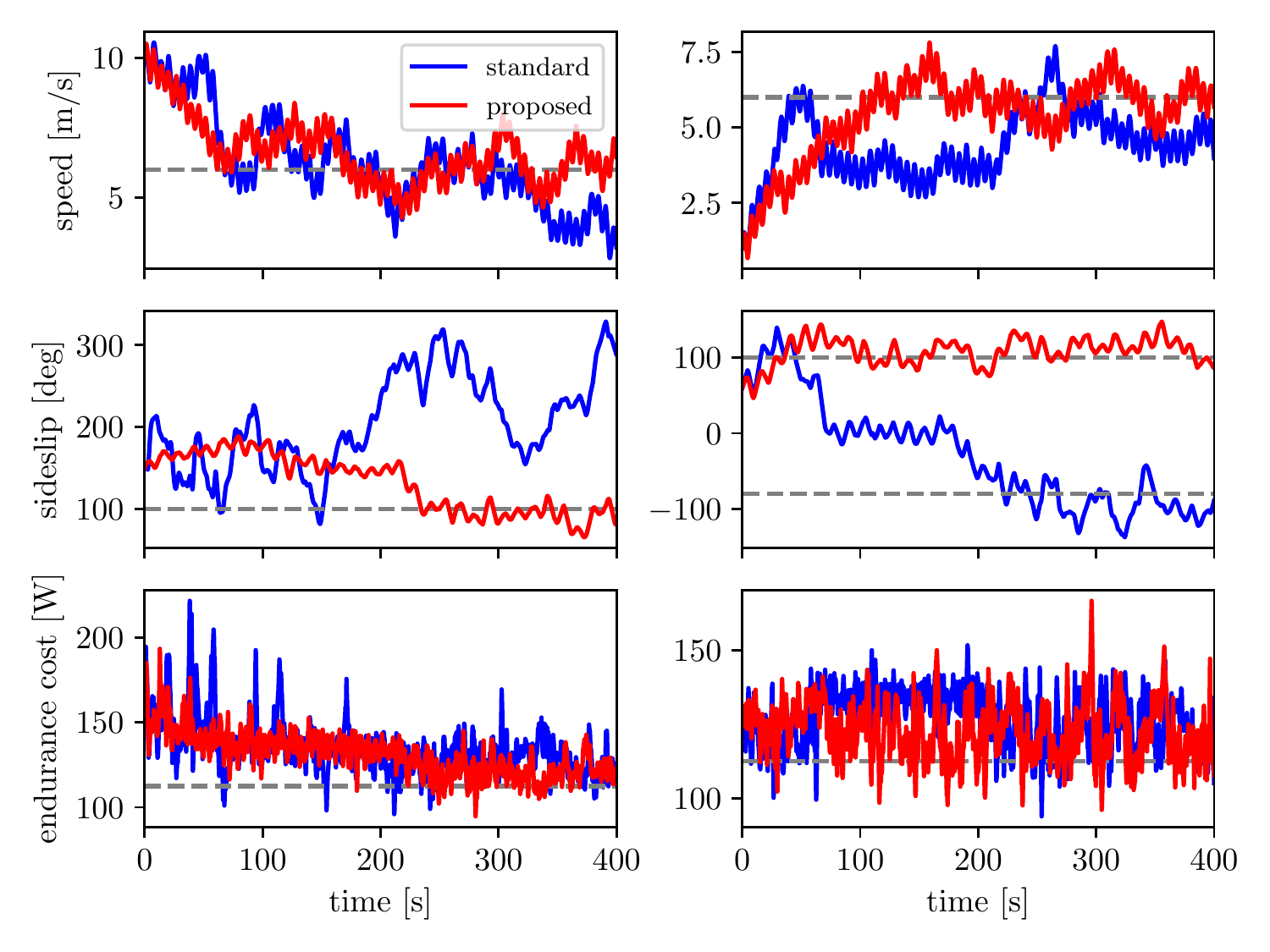}
        \label{subfig:escEnduranceWithBox}
    }
    \subfigure[Without carrying an additional box payload.]
    {
        \includegraphics[width =  0.48\linewidth]{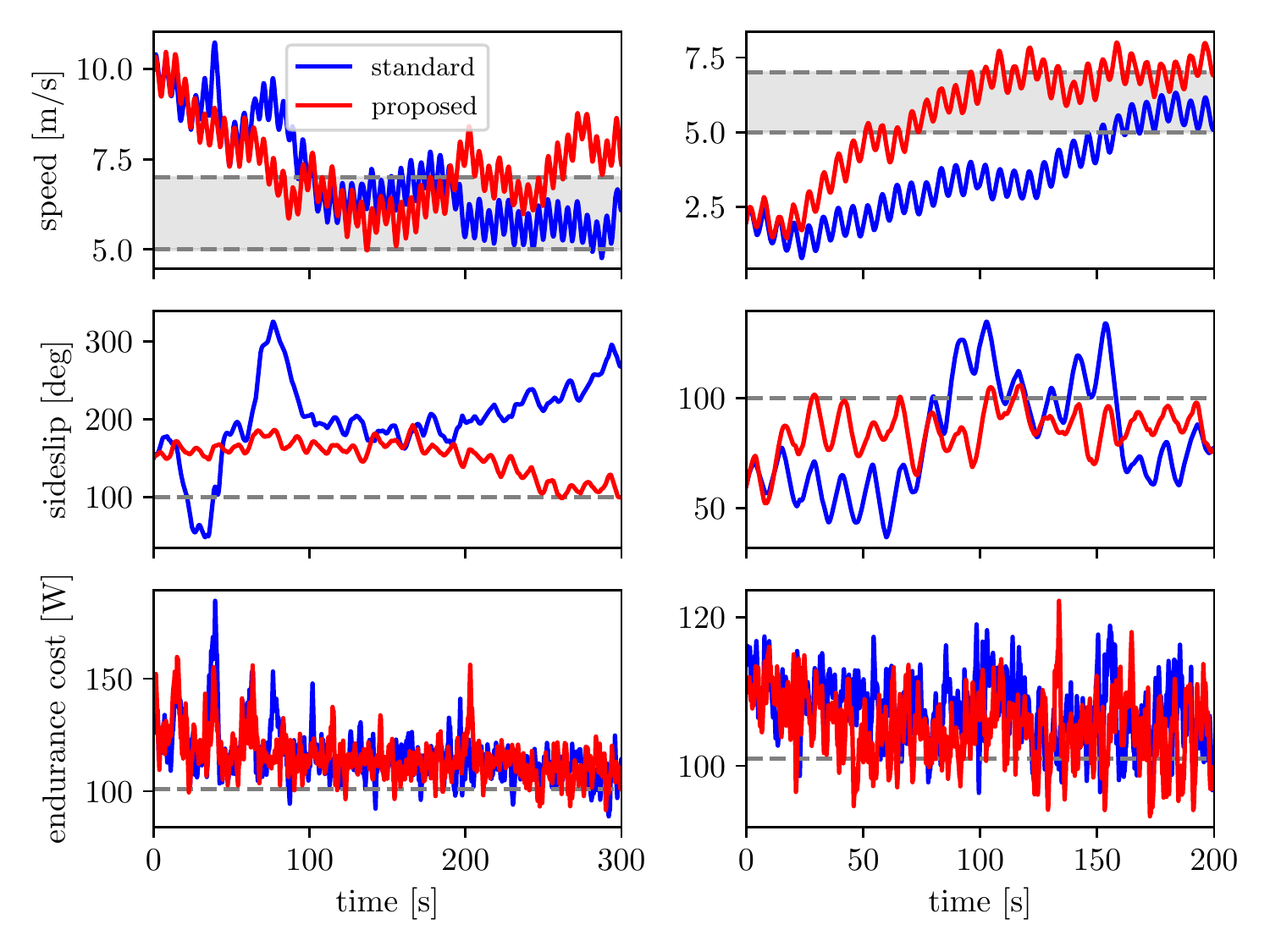}
        \label{subfig:escEnduranceWithoutBox}
    }
    \vspace{-1ex}
    \caption{Optimal endurance speed and sideslip searching performance comparison between the proposed method (red lines) and the standard method (blue lines). The ground truth values for optimal speed and sideslip are marked as grey dashed lines and grey shaded regions (values from \figref{fig:groundTruth}). The results when carrying an additional box payload are shown in (a) and the results with no additional box payload are shown in (b). Each column in the subfigures represent a test with a different initial speed and sideslip. In the second test of (a), the optimal sideslip is marked at both 100 degrees and -80 degrees. This is because the vehicle and payload are rotational symmetric, such that a sideslip offset of 180 degrees has the same effect on the vehicle's power consumption. In (b), the optimal speed is marked as a range between 5 - 7 m/s, because the cost function values are very close in this range with less than 1\% difference. }
    \label{fig:escEndurance}
\end{figure*}

\subsubsection{Cost value ground truth}
\label{sec:cost_value_ground_truth}
To verify that the proposed extremum seeking controller is able to converge close to the optimal speed and sideslip, we experimentally evaluated the optimal range and optimal endurance cost functions.
When the vehicle is carrying the box payload, the values of the cost functions at various speed and sideslip are shown at \figref{subfig:groundTruthRangeWithBox} and \figref{subfig:groundTruthEnduranceWithBox}, while \figref{subfig:groundTruthRangeNoBox} and \figref{subfig:groundTruthEnduranceNoBox} show the values without box.
\par
The data shows the importance of flying at the energy efficient speed and sideslip: compared to flying at the maximum achievable speed in the experiments with a uniformly selected random sideslip, flying at the optimal range speed and sideslip on average increases the flight range by 14.3\% without payload and 19.4\% with a box payload.
Besides, compared to hovering, flying at the optimal endurance speed and sideslip increases the flight time by 7.5\% without payload and 14.4\% with a box payload.

\subsubsection{Convergence speed comparison and discussion}
\label{sec:convergence_comparison}
The convergence speed of the standard and the proposed methods are compared in experiments with/without a box payload and with different initial conditions.
We consider the extremum seeking controller converges when both the speed and sideslip settle close to their optimal value.
When the goal is to find the speed and sideslip which achieve the optimal flight range, the results are compared in \figref{subfig:escRangeWithBox} and \figref{subfig:escRangeWithoutBox}.
When the goal is to find the speed and sideslip which achieve the optimal flight endurance, the results are compared in \figref{subfig:escEnduranceWithBox} and \figref{subfig:escEnduranceWithoutBox}. 
The convergence times are summarized in Table \ref{tab:optimal_range} for optimal range and in Table \ref{tab:optimal_endurance} for optimal endurance (N/A represents that the method failed to converge by the end of the experiment).
We can see that the proposed method converged about twice as fast as the standard method in these tests.

\begin{table}[ht]
\renewcommand{\arraystretch}{1.2}
\caption{Optimal range speed and sideslip seeking}
\vspace{-1ex}
\label{tab:optimal_range}
\centering
\begin{tabular}{|l|l|l|l|l|}
\hline
Payload & Initial speed & Initial sideslip & Standard & Proposed \\ \hline
\multirow{2}{*}{box} & 6 m/s & 60 deg & 200 s & 50 s \\ \cline{2-5} 
 & 3 m/s & 150 deg & 250 s & 100 s \\ \hline
\multirow{2}{*}{none} & 4 m/s & 150 deg & 250 s & 125 s \\ \cline{2-5} 
 & 15 m/s & 50 deg & N/A & 100 s \\ \hline
\end{tabular}
\end{table}
 
\begin{table}[ht]
\renewcommand{\arraystretch}{1.2}
\caption{Optimal endurance speed and sideslip seeking}
\vspace{-1ex}
\label{tab:optimal_endurance}
\centering
\begin{tabular}{|l|l|l|l|l|}
\hline
Payload & Initial speed & Initial sideslip & Standard & Proposed \\ \hline
\multirow{2}{*}{box} & 10 m/s & 150 deg & N/A & 250 s \\ \cline{2-5} 
 & 1 m/s & 60 deg & 200 s & 100 s \\ \hline
\multirow{2}{*}{none} & 10 m/s & 150 deg & N/A & 250 s \\ \cline{2-5} 
 & 2 m/s & 60 deg & 175 s & 75 s \\ \hline
\end{tabular}
\end{table}

In summary, we can see that the proposed extremum seeking controller with step-size adapter converged  about twice as fast as the standard extremum seeking controller.
In addition, the parameters of the extremum seeking controller were tuned for optimal range speed and sideslip searching when carrying a box payload, as mentioned in \secref{sec:esc_parameter_selection}. 
The same set of parameters still worked well for the other experiment setups (optimal endurance goal, with and without box payload) for the proposed method, showing that the method has good robustness to parameters. 
However, the standard extremum seeking method failed to converge in some cases, suggesting it is less robust.
\par
Like other perturbation-based extremum seeking methods, the convergence speed of the proposed method is still limited by the time-scale separation, which requires the changing of the speed and sideslip setpoints to be slow compared to the perturbation frequencies.
In our experimental tests, the proposed extremum seeking controller converged within 2 minutes in the majority of cases.
We think this would be a practically useful convergence time considering the flight time of most multicopters are between 10 to 20 minutes \cite{karydis2017energetics}.
\subsection{Cost of extremum seeking}
Since the perturbations are applied by the extremum seeking controller, the power consumption of the vehicle will be higher than the flight at a constant reference  without perturbations.
In this subsection, we compare the optimal values of the cost function without perturbation (i.e., optimal cost values in \figref{fig:groundTruth}) with the average cost values when flying at the same mean speed and sideslip but with perturbations applied. 
The increases in cost are summarized in Table \ref{tab:cost}.

\begin{table}[ht]
\renewcommand{\arraystretch}{1.2}
\caption{Optimal cost increase due to perturbation}
\vspace{-1ex}
\label{tab:cost}
\centering
\begin{tabular}{|l|l|l|l|l|}
\hline
\begin{tabular}[c]{@{}l@{}}Optimization\\ goal\end{tabular} & \begin{tabular}[c]{@{}l@{}}Payload\end{tabular} & \begin{tabular}[c]{@{}l@{}}Cost without\\ perturbation\end{tabular} & \begin{tabular}[c]{@{}l@{}}Cost with\\ perturbation\end{tabular} & \begin{tabular}[c]{@{}l@{}}Cost\\ increase\end{tabular} \\ \hline
\multirow{2}{*}{range} & box & 12.8 J/m & 13.2 J/m & 3.1 \% \\ \cline{2-5}
 & none & 11.0 J/m & 11.4 J/m & 4.0 \% \\ \hline
\multirow{2}{*}{endurance} & box & 112.5 W & 116.4 W & 3.5 \% \\ \cline{2-5}
 & none & 101 W & 105.2 W & 4.2 \% \\ \hline
\end{tabular}
\end{table}

In summary, the increase in cost was 3.1 - 4.2 $\%$ because of the perturbations applied by the extremum seeking controller.
This is less than the power consumption reduction when flying at the optimal endurance speed compared to hovering, which is 12.6\% with the box payload and 7\% without it, so the advantage of the proposed method  outweighs its cost.
\par
To reduce the impact of this increase, the extremum seeking controller can be enabled only when there is a model change (e.g., picking up a new payload), and disabled after convergence. 
In addition, decreasing the perturbation magnitude will be helpful for reducing the additional cost of perturbation, but this will also reduce the convergence speed.
One should take these two factors into account when selecting the proper perturbation magnitude.

\subsection{Performance under strong wind disturbances}

We further evaluated the performance of the proposed extremum seeking controller with adaptive step size under strong wind disturbances.
Like the aforementioned experiments, the vehicle was commanded to follow a circular path with a radius of 30 meters at 5 meters in height.
The wind was measured by a Young 81000 anemometer at 20 Hz with 0.01 $\text{m/s}$ resolution, at a height of 2 meters.
The extremum seeking controller's parameters are the same as the experiments under light wind in \secref{sec:compare_windstill}.

\begin{figure}[!htp]
    \centering
    \includegraphics[width =\linewidth]{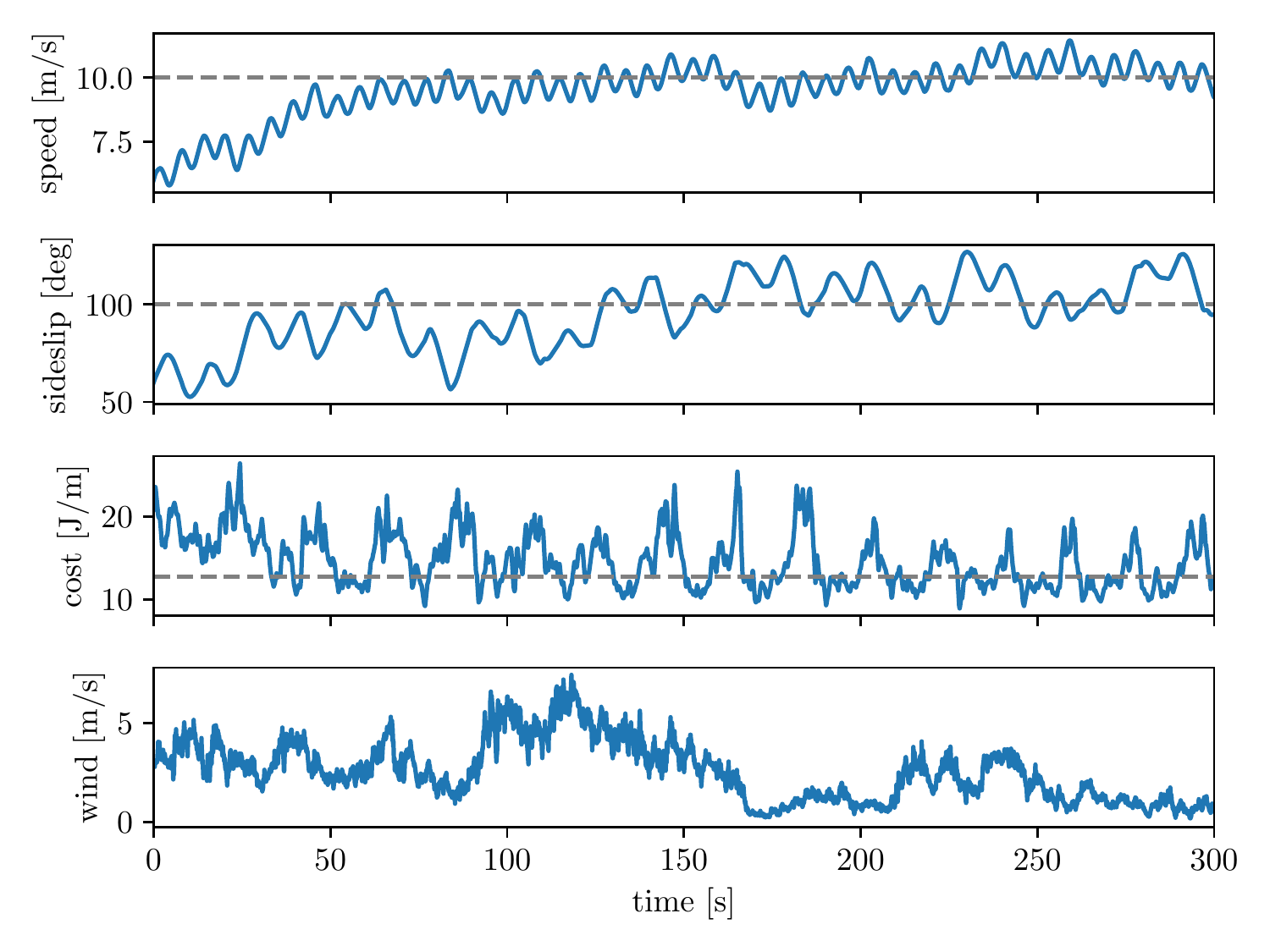}
    \vspace{-1ex}
    \caption{Optimal range speed and sideslip seeking under strong wind disturbances, with the box payload. 
    The optimal values of the speed, sideslip and cost function are marked as grey dashed lines.
    The maximum magnitude of wind disturbances is 7.43 $\text{m/s}$. }
    \label{fig:range_disturbance}
\end{figure}

\begin{figure}[!htp]
    \centering
    \includegraphics[width =\linewidth]{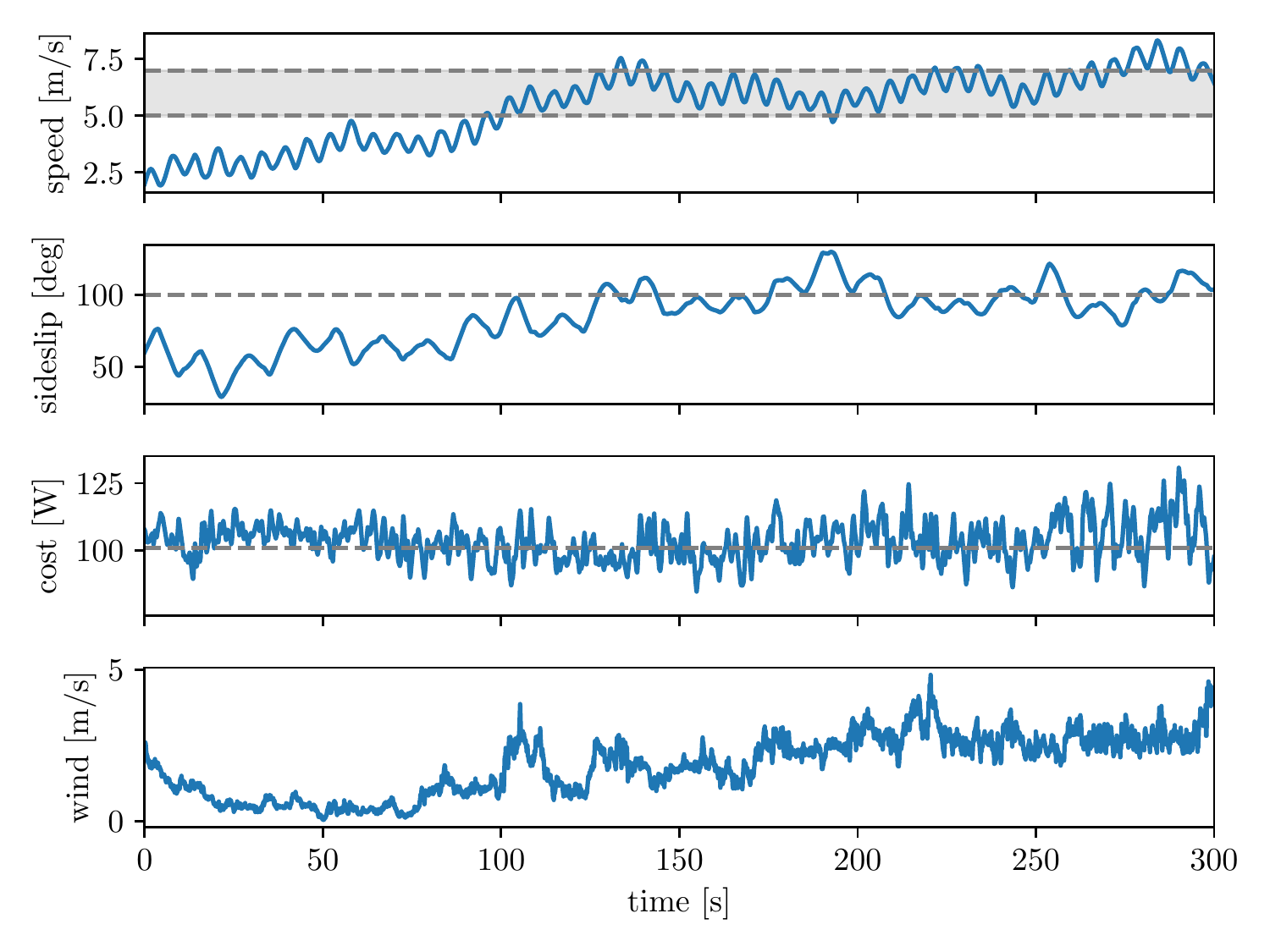}
    \vspace{-1ex}
    \caption{Optimal endurance speed and sideslip seeking under strong wind disturbances, without the box payload. 
    The optimal values of the sideslip and cost function are marked as grey dashed lines.
    The optimal value of the speed is marked as a range between 5 - 7 m/s, because the cost function values are very close in this range, with less than 1\% difference.
    The maximum magnitude of wind disturbances is   4.83 $\text{m/s}$}
    \label{fig:endurance_disturbance}
\end{figure}

The experiments demonstrated that the proposed method was still able to find the optimal range and endurance speed and sideslip, as shown in  \figref{fig:range_disturbance} and \figref{fig:endurance_disturbance}.
The maximum wind magnitude was 7.43 m/s in the optimal range experiment, and was 4.83 m/s in the optimal endurance experiment.
The proposed method is not very sensitive to wind disturbances:
because of the time-scale separation in the extremum seeking controller, the change in the speed and sideslip setpoints by the extremum seeking controller is very slow compared with the closed-loop dynamics of the vehicle.
\par
Compared with the tests with the same initial conditions but under light wind in \secref{sec:convergence_comparison}, the wind disturbances caused larger oscillations in the reference sideslip  (\figref{fig:range_disturbance} compared with the first column of \figref{subfig:escRangeWithBox}) and longer convergence time (\figref{fig:endurance_disturbance} compared with the second column of \figref{subfig:escEnduranceWithoutBox}).

\section{Conclusion}
\label{sec:conclusion}
An online, adaptive, model-free method for finding the speed and sideslip that maximize the flight range or endurance of multicopters is proposed in this work.
Not dependent on any power consumption model of the vehicle, it is able to adapt to different payloads and is easy to deploy. 
The proposed method can mitigate the common problem of limited flight range and endurance of multicopters.
Based on a novel multivariable extremum seeking controller with adaptive step size, it is able to achieve faster convergence compared to the standard extremum seeking controller with fixed step size.
\par
Through realistic outdoor experiments, we show that this method is able to find the optimal speed and sideslip correctly under different payloads and under strong wind disturbances.
In addition to multicopters, this method can also be applied to fixed wing aerial robots to find the optimal flight speed (to achieve the longest flight time or distance) whose sideslip is usually not a free degree of freedom in path planning. 


\section*{Acknowledgements}
This work was partially supported by the J.K. Zee Fellowship, the UC Berkeley Graduate Division Block Grant Award, and the AFRI Competitive Grant no. 2020-67021-32855/project accession no. 1024262 from the USDA National Institute of Food and Agriculture.
The AFRI Competitive Grant is being administered through AIFS: the AI Institute for Next Generation Food Systems. \url{https://aifs.ucdavis.edu}.

Research was also partially sponsored by the Army Research Laboratory and was accomplished under Cooperative Agreement Number W911NF-20-2-0105. 
The views and conclusions contained in this document are those of the authors and should not be interpreted as representing the official policies, either expressed or implied, of the Army Research Laboratory or the U.S. Government. 
The U.S. Government is authorized to reproduce and distribute reprints for Government purposes notwithstanding any copyright notation herein.
\par

{
\bibliographystyle{IEEEtran}
\bibliography{bib/bibliography}
}
\newpage

\label{sec:appendix}
\section*{Appendix: Proof of Proposition 1}
The reduced system \eqref{equ:reduced_model} is in the form where the averaging method is applicable \cite[Chapter 10.4]{khalil2002nonlinear} ($\delta$ is a small positive parameter). Its corresponding averaged system dynamics can be described as follows,
\begin{align}
    \frac{d}{d \tau} \mmat{\tilde{r}^a_v \\ \tilde{r}^a_\beta \\ q^a_v \\ q^a_\beta \\ \tilde{\eta}^a_v \\ \tilde{\eta}^a_\beta \\ m^a_v \\ m^a_\beta} = 
    \delta \mmat{ (-k_v^{'} q^a_v)/\sqrt{m^a_v + \epsilon} \\[3pt]
                  (-k_\beta^{'} q^a_\beta)/\sqrt{m^a_\beta + \epsilon} \\[3pt]
                  \omega_{lv}^{'} \frac{1}{\Pi}\int_0^{\Pi} (v(\bm{\tilde{r}^a} + \bm{\Bar{p}}(\sigma)) \sin \omega_v^{'} \sigma d\sigma - \omega_{lv}^{'}q^a_v \\[3pt]
                  \omega_{l\beta}^{'} \frac{1}{\Pi}\int_0^{\Pi} (v(\bm{\tilde{r}^a} + \bm{\Bar{p}}(\sigma)) \sin \omega_\beta^{'} \sigma d\sigma - \omega_{l\beta}^{'}q^a_\beta \\[3pt]
                  -\omega_{hv}^{'}\tilde{\eta}^a_v + \omega_{hv}^{'} \frac{1}{\Pi} \int_0^{\Pi} v(\bm{\tilde{r}^a} + \bm{\Bar{p}}(\sigma)) d\sigma \\[3pt]
                  -\omega_{h\beta}^{'}\tilde{\eta}^a_\beta + \omega_{h\beta}^{'} \frac{1}{\Pi} \int_0^{\Pi} v(\bm{\tilde{r}^a} + \bm{\Bar{p}}(\sigma)) d\sigma \\[3pt]
                  \gamma_v^{'} (-m^a_v + {q_v^{a}}^2) \\[3pt]
                  \gamma_\beta^{'} (-m^a_\beta + {q_\beta^{a}}^2)
                }, \label{equ:reduced}
\end{align}
where the superscript $a$ denotes the variables of the averaged system, and $\Pi$ is the least common period of sinusoidal functions with frequencies of $\omega_v^{'}$ and $ \omega_\beta^{'}$.

The equilibrium point of the averaged system \eqref{equ:reduced} is denoted as $[\tilde{r}^{a,e}_v, \tilde{r}^{a,e}_\beta, q^{a,e}_v, q^{a,e}_\beta, \tilde{\eta}^{a,e}_v, \tilde{\eta}^{a,e}_\beta, m^{a,e}_v, m^{a,e}_\beta]^T$ which satisfies:
\begin{align}
    & q^{a,e}_v = q^{a,e}_\beta = 0, \\
    & m^{a,e}_v = m^{a,e}_\beta = 0, \\
    & \int_0^{\Pi} (v(\bm{\tilde{r}^{a,e}} + \bm{\Bar{p}}(\sigma)) \sin \omega_v^{'} \sigma d\sigma = 0, \label{equ: r_v_equilibrium}\\
    & \int_0^{\Pi} (v(\bm{\tilde{r}^{a,e}} + \bm{\Bar{p}}(\sigma)) \sin \omega_\beta^{'} \sigma d\sigma = 0, 
    \label{equ: r_s_equilibrium}\\
    & \tilde{\eta}^{a,e}_v = \tilde{\eta}^{a,e}_\beta = \frac{1}{\Pi} \int_0^{\Pi} v(\bm{\tilde{r}^{a,e}} + \bm{\Bar{p}}(\sigma)) d\sigma, \label{equ: eta_equilibrium}
\end{align}
where the superscript $e$ denotes the variables for the equilibrium point.
We consider $\tilde{r}^{a,e}_v $ and $ \tilde{r}^{a,e}_\beta$ as perturbations with second-order Taylor series expansion over $a_v$ and $a_\beta$,
\begin{align}
    \tilde{r}^{a,e}_v =& \ b_{1,v} a_v + b_{2,v}a_\beta \notag \\
    & + b_{3,v} a_v^2 + b_{4,v} a_v a_\beta + b_{5,v} a_\beta^2 + O(\mnorm{\bm{a}}^3), \label{equ: postulate_r_v} \\
    \tilde{r}^{a,e}_\beta =& \ b_{1,\beta} a_v + b_{2,\beta}a_\beta \notag \\
    & + b_{3,\beta} a_v^2 + b_{4,\beta} a_v a_\beta + b_{5,\beta} a_\beta^2 + O(\mnorm{\bm{a}}^3), \label{equ: postulate_r_s}
\end{align}
where $b_{i,v}$ and $b_{i,\beta}$ ($i = 1,..,5$) are constant numbers. By substituting $\eqref{equ: postulate_r_v}$, $\eqref{equ: postulate_r_s}$ into $\eqref{equ: r_v_equilibrium}$, $\eqref{equ: r_s_equilibrium}$, integrating and equating the like powers of $a_v$ and $a_\beta$, we can find that the first-order coefficients and second-order coefficients for the mixing terms are zero, and $\tilde{r}^{a,e}_v$ and $\tilde{r}^{a,e}_\beta$ can be written as:
\begin{align}
    & \tilde{r}^{a,e}_v = b_{3,v} a_v^2 + b_{5,v} a_\beta^2 + O(\mnorm{\bm{a}}^3), \label{equ: r_v_equilibrium_result}\\
    & \tilde{r}^{a,e}_\beta = b_{3,\beta} a_v^2 + b_{5,\beta} a_\beta^2 + O(\mnorm{\bm{a}}^3). \label{equ: r_s_equilibrium_result}
\end{align}
In addition, by substituting $\eqref{equ: r_v_equilibrium_result}$, $\eqref{equ: r_s_equilibrium_result}$ into $\eqref{equ: eta_equilibrium}$ and integrating, we can get 
\begin{align}
    & \tilde{\eta}^{a,e}_v = \tilde{\eta}^{a,e}_\beta = \frac{1}{4} (H_{11} a_v^2 + H_{22} a_\beta^2) + O(\mnorm{\bm{a}}^3).
\end{align}

At the equilibrium point of the averaged system in \eqref{equ:reduced}, the Hessian $J^{a,e}_r$ is a block-diagonal matrix as follows,
\begin{equation}
\label{equ:jacobian}
    J^{a,e}_r = \delta \begin{bmatrix}
    A & 0_{4 \times 4} \\
    B & -\text{diag}(\omega^{'}_{hv}, \omega^{'}_{h\beta}, \gamma^{'}_v, \gamma^{'}_\beta)
    \end{bmatrix},
\end{equation}
where $A, B \in \mathbb{R}^{4 \times 4}$,
\begin{equation}
    A = \begin{bmatrix}
    0 & 0 & -k^{'}_v/\sqrt{\epsilon} & 0 \\
    0 & 0 & 0 & -k^{'}_\beta/\sqrt{\epsilon} \\
    A_{31} & A_{32} & -\omega^{'}_{lv} & 0 \\
    A_{41} & A_{42} & 0 & -\omega^{'}_{l\beta}
    \end{bmatrix},
\end{equation}
\begin{equation}
    B = \begin{bmatrix}
    B_{11} & B_{12} & 0 & 0 \\
    B_{21} & B_{22} & 0 & 0 \\
    0 & 0 & 0 & 0 \\
    0 & 0 & 0 & 0
    \end{bmatrix},
\end{equation}
with expressions of two matrices,
\begin{align}
    & \mmat{A_{31} & A_{32}}^T = \frac{\omega^{'}_{lv}}{\Pi} \int_0^\Pi \mpd{v(\tilde{\bm{r}}^{a,e}+\bm{\Bar{p}}(\sigma))}{\tilde{\bm{r}}^{a,e}} \sin \omega^{'}_v \sigma d \sigma, \\[3pt]
    & \mmat{A_{41} & A_{42}}^T = \frac{\omega^{'}_{l\beta}}{\Pi} \int_0^\Pi \mpd{v(\tilde{\bm{r}}^{a,e}+\bm{\Bar{p}}(\sigma))}{\tilde{\bm{r}}^{a,e}} \sin \omega^{'}_\beta \sigma d \sigma, \\
    & \mmat{B_{11} & B_{12}}^T = \frac{\omega^{'}_{hv}}{\Pi} \int^{\Pi}_0 \mpd{v(\tilde{\bm{r}}^{a,e}+\bm{\Bar{p}}(\sigma))}{\tilde{\bm{r}}^{a,e}} d \sigma, \\
    & \mmat{B_{21} & B_{22}}^T = \frac{\omega^{'}_{h\beta}}{\Pi} \int^{\Pi}_0 \mpd{v(\tilde{\bm{r}}^{a,e}+\bm{\Bar{p}}(\sigma))}{\tilde{\bm{r}}^{a,e}} d \sigma.
\end{align}
Hence, the block-lower-triangular matrix $J^{a,e}_r$ in \eqref{equ:jacobian} is Hurwitz if and only if that all diagonal submatrices are Hurwitz. Since $\delta,  \gamma^{'}_v$, $\gamma^{'}_\beta$, $\omega^{'}_{hv}$ and $\omega^{'}_{h\beta}$ are positive constants, it remains to prove $A$ as Hurwitz for stability.

With a first-order Taylor expansion we can get that 
\begin{align}
    \mmat{A_{31} & A_{32} \\ A_{41} & A_{42}} = \frac{1}{2} \mmat{\omega^{'}_{lv} a_v & 0 \\ 0 & \omega^{'}_{l\beta} a_\beta} H + O(\mnorm{\bm{a}}). 
\end{align}
The characteristic polynomial of $A$ with roots $\lambda$ can be written by computing the determinant of $\lambda I - A$, 
\begin{align}
    & \text{det} (\lambda I -  A) \nonumber \\
    & = \text{det} \left(  \lambda I \left( \lambda I + \delta \mmat{\omega^{'}_{lv} & 0 \\ 0 & \omega^{'}_{l\beta}} \right) \right. \nonumber \\ 
    & \left. \quad + \frac{\delta^2}{\sqrt{\epsilon}} \mmat{A_{31} & A_{32} \\ A_{41} & A_{42}} \mmat{k_v^{'} & 0 \\ 0 & k_\beta^{'} } \right) \notag \\
    & = \text{det} \left( \lambda^2 I + \lambda \delta \mmat{\omega^{'}_{lv} & 0 \\ 0 & \omega^{'}_{l\beta}} \right. \nonumber \\
    & \left. \quad + \frac{\delta^2}{2\sqrt{\epsilon}} \mmat{\omega^{'}_{lv} a_v & 0 \\ 0 & \omega^{'}_{l\beta} a_\beta} H \mmat{k_v^{'} & 0 \\ 0 & k_\beta^{'} } + O(\delta^2 \mnorm{\bm{a}}) \right), \label{equ:det}
\end{align}
which can be expanded to a 4th order polynomial of $\lambda$.
Under the assumptions that $\mnorm{\bm{a}}$ is small and that the Hessian $H$ in \eqref{minimum-perturbation-dynamics} is positive,
the roots of this 4th order polynomial can be shown have negative real parts using the Routh-Hurwitz criterion \cite[Chap. 6.2]{nise2015control}, implying that $A$ is Hurwitz. Therefore, $J^{a,e}_r$ is proven as Hurwitz.
The Hurwitz Jacobian $J^{a,e}_r$ indicates that the equilibrium point of the averaged system \eqref{equ:reduced} is locally exponentially stable if $a_v$ and $a_\beta$ are sufficiently small.
Then according to \cite[chapter 10.4]{khalil2002nonlinear}, the theorem is proved. 

\begin{IEEEbiography}
    [{\includegraphics[width=1in,height=1.25in,clip,keepaspectratio]{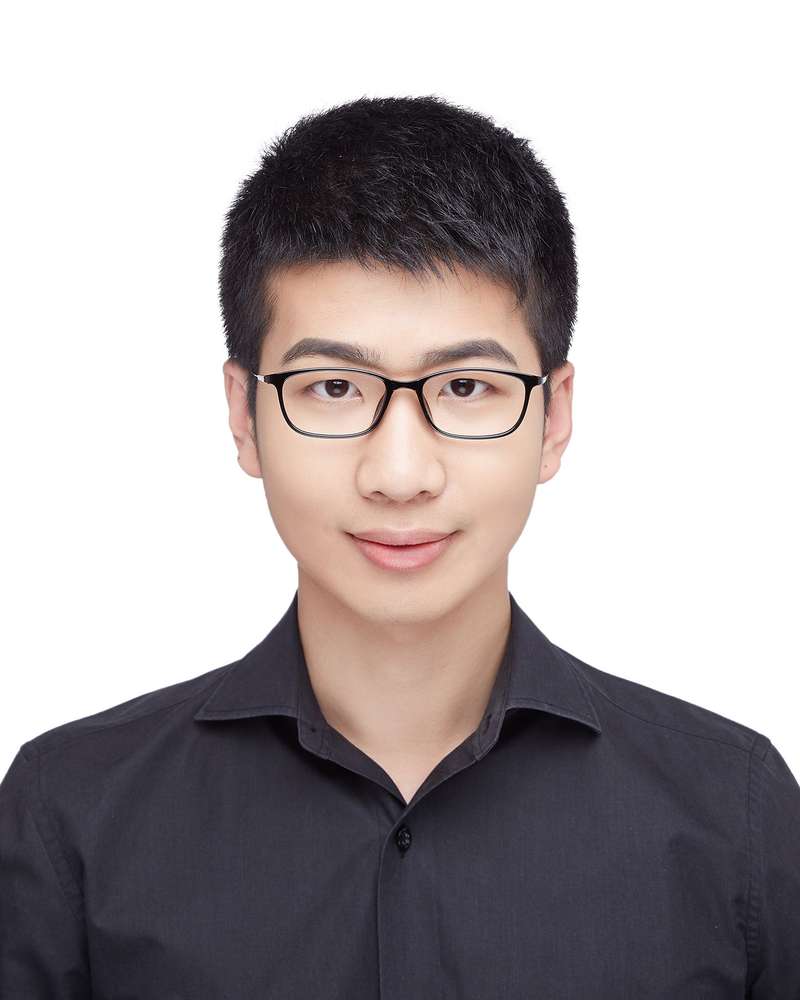}}]{Xiangyu Wu}
received his bachelor of science degree from Beijing Institute of Technology, China in 2017 and master of science degree from University of California, Berkeley, USA in 2019.
He is currently a Ph.D. candidate at the High Performance Robotics Lab at UC Berkeley.
His current research interests are the state estimation and path planning of multicopters.
\end{IEEEbiography}
\begin{IEEEbiography}
    [{\includegraphics[width=1in,height=1.25in,clip,keepaspectratio]{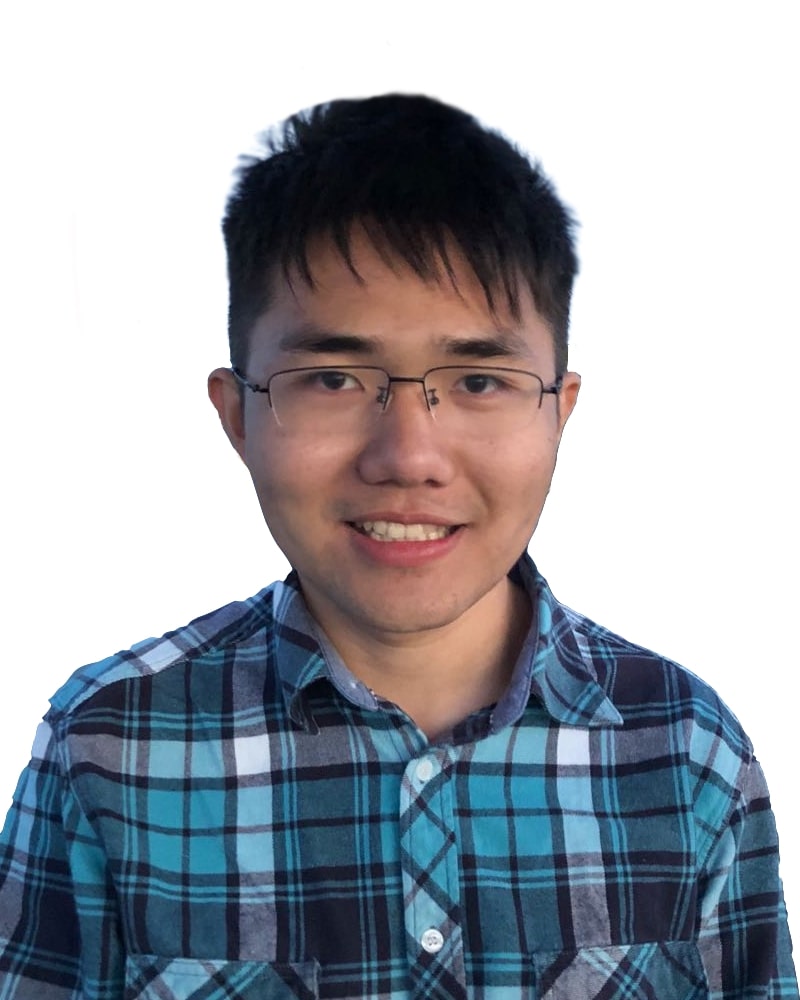}}]{Jun Zeng}
received his B.S.E degree from Shanghai Jiao Tong University (SJTU), China in 2016 and Dipl. Ing. from Ecole Polytechnique, France in 2017. He is currently a Ph.D. candidate supervised by Koushil Sreenath at Hybrid Robotics Group of Mechanical Engineering at University of California, Berkeley, USA. His research interests lie at the intersection of control, optimization and learning on robotics.
\end{IEEEbiography}
\begin{IEEEbiography}
    [{\includegraphics[width=1in,height=1.25in,clip,keepaspectratio]{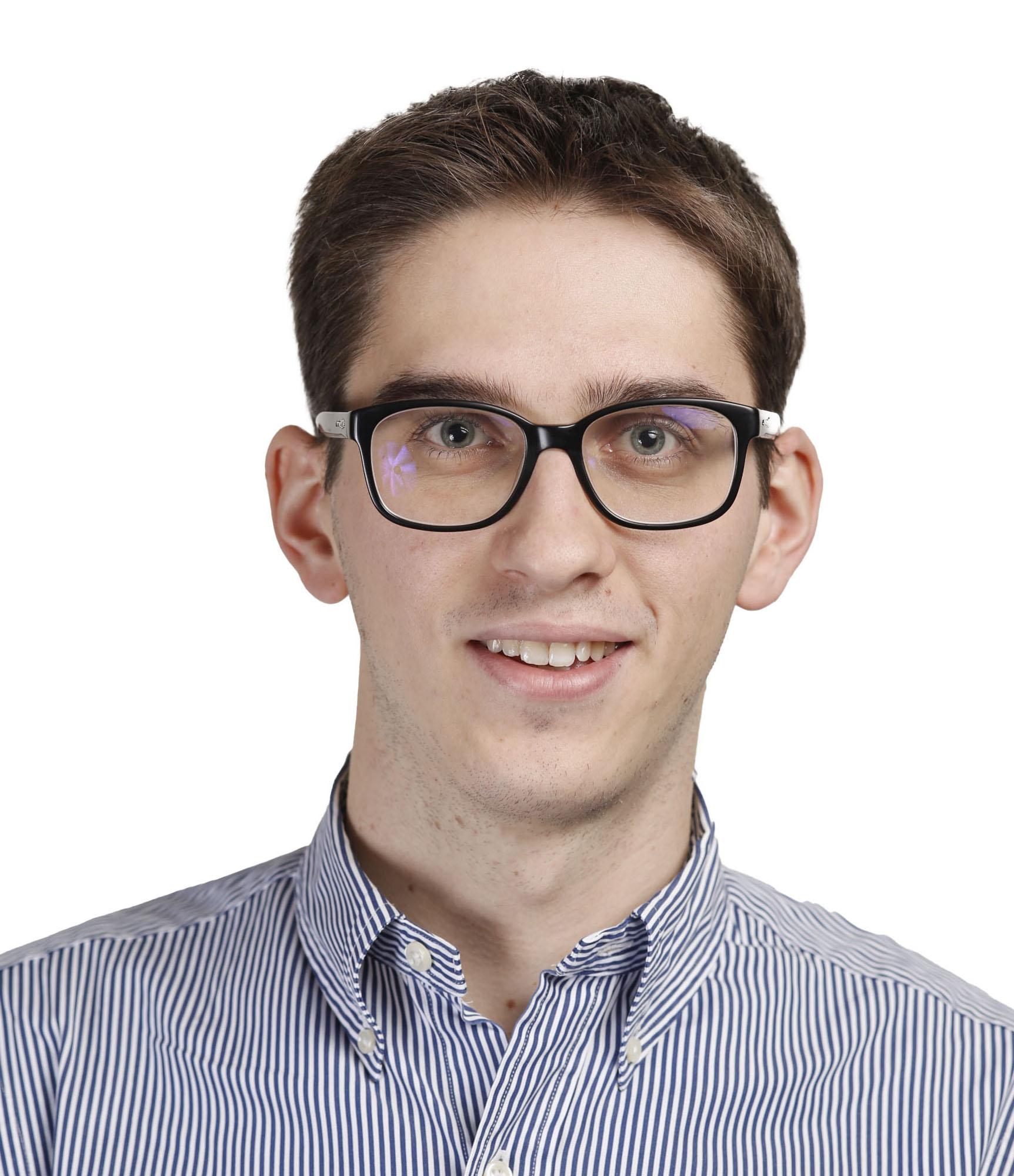}}]{Andrea Tagliabue}
Andrea  Tagliabue is a Ph.D. candidate with the Laboratory for Information and Decision Systems at MIT. Prior to that, he was a Robotics Engineer at Caltech, affiliate with NASA’s Jet Propulsion  Laboratory, and visiting researcher at U.C. Berkeley. He received a M.Sc. from  ETH Zurich, and a B.S. with honours from Politecnico di Milano. His research interests include learning-based methods for control, planning and state estimation for aerial robots.
\end{IEEEbiography}
\begin{IEEEbiography}
    [{\includegraphics[width=1in,height=1.25in,clip,keepaspectratio]{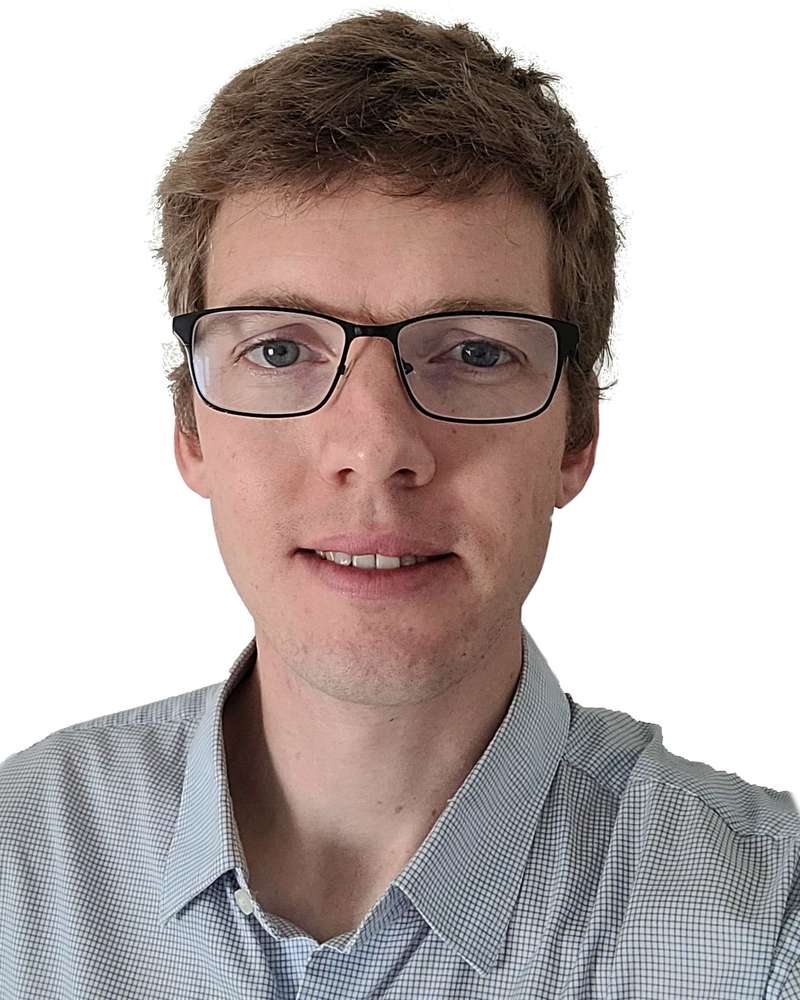}}]{Mark W. Mueller}
Mark w. Mueller is an assistant professor of Mechanical Engineering at the University of California, Berkeley, and runs the High Performance Robotics Laboratory (HiPeRLab). He received a Dr.Sc. and M.Sc. from the ETH Zurich in 2015 and 2011, respectively, and a BSc from the University of Pretoria in 2008. His research interests include aerial robotics, their design and control, and especially the interactions between physical design and algorithms.
\end{IEEEbiography}

\end{document}